\documentclass[twoside]{article}

%
\usepackage[accepted]{aistats2022}
%


\setlength{\pdfpageheight}{11in}
\setlength{\pdfpagewidth}{8.5in}

\usepackage[round]{natbib}

\newcommand{\dsep}[4]{ #1 \perp_{#4} #2 \vert #3}

\bibliographystyle{apalike}

\usepackage{tikz, subcaption}
\usepackage{mathrsfs}
\usetikzlibrary{shapes, arrows, positioning}
\usepackage{array}
\usepackage{dsfont}
\usepackage{amsmath,amsfonts,amssymb,mathtools, amsthm}
\usepackage[noend]{algorithmic}
\usepackage[ruled,vlined,linesnumbered]{algorithm2e}
\usepackage{tikz, subcaption}
\usetikzlibrary{shapes, arrows, positioning}

\newcommand{\independent}{\perp\mkern-9.5mu\perp}

\newtheorem{theorem}{Theorem}
\newtheorem{corollary}{Corollary}
\newtheorem{lemma}{Lemma}
\newtheorem{example}{Example}

\newtheorem{definition}{Definition}


\begin{document}

%

%

\twocolumn[

\aistatstitle{Causal Effect Identification with Context-specific Independence Relations of Control Variables}

\aistatsauthor{ Ehsan Mokhtarian \And Fateme Jamshidi \And  Jalal Etesami \And Negar Kiyavash }

\aistatsaddress{ EPFL, Switzerland \And  EPFL, Switzerland \And EPFL, Switzerland \And EPFL, Switzerland} ]

\begin{abstract}
    We study the problem of causal effect identification from observational distribution given the causal graph and some context-specific independence (CSI) relations. It was recently shown that this problem is NP-hard, and while a sound algorithm to learn the causal effects is proposed in  \cite{tikka2020identifying}, no provably complete algorithm for the task exists. In this work, we propose a sound and complete algorithm for the setting when the CSI relations are limited to observed nodes with no parents in the causal graph. One limitation of the state of the art in terms of its applicability is that the CSI relations among all variables, even unobserved ones, must be given (as opposed to learned). Instead, We introduce a set of graphical constraints under which the CSI relations can be learned from mere observational distribution. This expands the set of identifiable causal effects beyond the state of the art. 
\end{abstract}

\section{INTRODUCTION}
    
    Data-driven approaches to identify a causal effect from a combination of observations, experiments, and side information about the problem of interest is central to science. 
    Causal effect identification considers whether an interventional probability distribution can be uniquely determined from available information (\cite{pearl1995causal,tian2003ID}).
    
    In the absence of unobserved variables in the system, \cite{pearl1995causal} showed that the causal graph along with the observational distribution suffices to uniquely identify all interventional distributions.
    On the other hand, when there are hidden variables, causal identifications become more challenging.
    \cite{pearl1995causal} introduced three rules, known as do-calculus, for identifying causal effect in the presence of unobserved variables. 
    \cite{shpitser2006identification} later showed that applying these rules along with probabilistic manipulations is \textit{complete} to determine whether an interventional distribution is identifiable from only observational distribution given the causal graph.
    Interestingly, when further side information about the underlying generative model is available, the completeness results of do-calculus is no longer valid. That is, although more causal effects become identifiable, do-calculus based methods fail to identify them.
    For example, consider the causal graph in Figure \ref{fig: example1}. The causal effect of $X$ on $Y$ is not identifiable from the graph, while given a set of context-specific independence (CSI) relations; it becomes identifiable (see Example \ref{ex:example2}).
    
    Statistical independence, specifically conditional independence (CI) relations between a set of random variables, plays an important role in causal inference \citep{mokhtarian2021recursive}. 
    An important generalization of this concept is CSI (\cite{boutilier1996context,shimony1991explanation}) which refers to a conditional independence relation that is true only in a specific context (See Section \ref{sec: control var} for more details). 
    Consider the following example. Smoking normally has a causal effect on blood pressure. But, when a person has a ratio of beta and alpha lipoproteins larger than a threshold, whether or not he smokes is unlikely to affect his blood pressure (\cite{edwards1985fast}). Thus, the blood pressure is independent of smoking in the context of this ratio.
    CSI relations have been used to analyze, e.g., gene expression data (\cite{barash2002context}), parliament elections, prognosis of heart disease (\cite{nyman2014stratified}), etc. 
    They have also been used to improve probabilistic inference (\cite{chavira2008probabilistic, dal2018parallel}) and structure learning (\cite{chickering1997bayesian, hyttinen2018structure}). 
    Similar to \cite{tikka2020identifying}, in this work, we study the causal effect identification problem with extra information in the form of CSI relations. Our contributions are as follows. 
    
    \begin{itemize}
        \item We study the causal identification problem in the presence of CSI relations of a subset of observed variables, called \textit{control variables}, as well as the causal graph. We show that this problem is equivalent to a series of causal effect identifications only from causal graphs (Theorem \ref{thm: main}). Consequently, we propose the first sound and complete algorithm (Algorithm \ref{algo}) for this problem. 
        
        \item We introduce a graphical constraint under which the CSI relations of control variables can be inferred only from the observational distribution (Algorithm \ref{algo 2}). This expands the set of identifiable causal effects beyond the state-of-the-art approaches. More precisely, do-calculus-based methods determine the identifiability of a causal effect only from the causal graph without utilizing the other available source of knowledge, that is the observational distribution. Algorithm \ref{algo 2} uses both the causal graph and the observational distribution to determine the identifiability. 
    \end{itemize}
    
    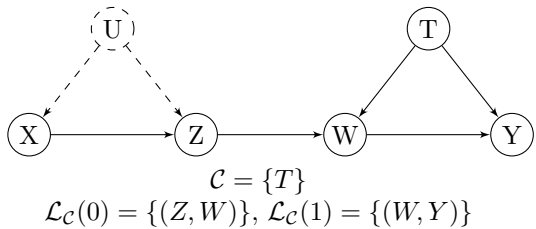
\begin{figure}[t] 
        \centering
        \tikzstyle{block} = [draw, fill=white, circle, text centered,inner sep= 0.2cm]
    	\tikzstyle{input} = [coordinate]
    	\tikzstyle{output} = [coordinate]
        \begin{subfigure}[b]{0.5\textwidth}
        \centering
	    \begin{tikzpicture}[->, auto, node distance=1.3cm,>=latex', every node/.style={inner sep=0.12cm}]
		    \node [block, dashed, label=center:U](U) {};
		    \node [block, below right= 1cm and 0.7cm of U, label=center:Z](Z) {};
    		\node [block, below left= 1cm and 0.7cm of U, label=center:X](X) {};
    		\node [block, right= 1.4cm of Z, label=center:W](W) {};
    		\node [block, above right= 1cm and 0.7cm of W, label=center:T](T) {};
    		\node [block, below right= 1cm and 0.7cm of T, label=center:Y](Y) {};
    		\draw[dashed] (U) to (Z);
    		\draw (X) to (Z);
    		\draw[dashed] (U) to (X);
    		\draw (Z) to (W);
    		\draw (T) to (W);
    		\draw (T) to (Y);
    		\draw (W) to (Y);
		\end{tikzpicture}
        \end{subfigure} 
	    \centering
	    $\mathcal{C} = \{T\}$
	    
		$\mathcal{L}_{\mathcal{C}}(0)= \{(Z,W)\},\, \mathcal{L}_{\mathcal{C}}(1)= \{(W,Y)\}$
    	\caption{Causal graph and $\mathcal{L}_{\mathcal{C}}$ in Example \ref{ex:example2}}
    	\label{fig: example1}
    \end{figure}


    To prove the completeness result of our proposed algorithm in Theorem \ref{thm: main}, we introduce a novel proof technique.
    In this technique, in order to show the identifiablity of a causal effect $P_{\mathbf{t}}(\mathbf{s})$ (see Section \ref{sec: preliminaries} for formal definition) from the observational distribution $P(\mathbf{N})$, instead of finding an exact formula for $P_{\mathbf{t}}(\mathbf{s})$ from $P(\mathbf{N})$, we show that it can be approximated by a functional of $P(\mathbf{N})$ with arbitrary accuracy. 

\subsection{Related Work}

The causal effect identification problem has been extensively studied in the literature.
Given the causal graph and the observational distribution, there are several sound and complete algorithms in the literature for identifying causal effects. 

\cite{pearl1995causal} proposed three rules, known as do-calculus, which, along with probabilistic manipulations, suffice to derive a formula for a causal effect $P_{\mathbf{t}}(\mathbf{s})$ based on observational distribution $P(\mathbf{N})$. 
Later, \cite{shpitser2006identification} showed that these rules are complete for the identification of  causal effect from $\mathcal{G}$. That is, if do-calculus rules fail to obtain a formula, the causal effect is non-identifiable.

\cite{tian2003ID} proposed another algorithm for this problem and \cite{huang2008completeness} later proved its completeness. 
In this algorithm, interventional distributions are expressed by $Q$ functions using $Q[\mathbf{s}](\mathbf{n}):=P_{\mathbf{n\setminus s}}(\mathbf{s})$. 
Subsequently, they show that the identifiablity of $P_{\mathbf{t}}(\mathbf{s})$ from $P(\mathbf{N})$ is equivalent to the identifiablity of a particular $Q$ function from $P(\mathbf{N})$ and propose an algorithm for the identifiablity of the $Q$ function from $P(\mathbf{N})$. 

There are other variants of causal identification problem in the literature. In these variants, causal effect identification problem is studied under different information sets. 
For instance, \cite{bareinboim2012causal} assumed there exists a subset of variables $\mathbf{Z}$ that could be intervened upon. 
That is, we have access to $P_{ \mathbf{z'}}(\mathbf{v} \setminus \mathbf{z'})$ for all $\mathbf{Z'} \subseteq \mathbf{Z}$, the observational distribution, and the causal graph. 
They presented a sound and complete algorithm for identification of a causal effect. 
However, in practice, we might not have access to $P_{ \mathbf{z'}}(\mathbf{v} \setminus \mathbf{z'})$ for all $\mathbf{Z'} \subseteq \mathbf{Z}$. 
\cite{lee2019general} generalized \cite{bareinboim2012causal}'s result by restricting the set of interventional distributions $P_{ \mathbf{z'}}(\mathbf{v} \setminus \mathbf{z'})$ to some $\mathbf{Z}'\subseteq \mathbf{V}$ by proposing 
a sound and complete algorithm for this setting. 
\cite{lee2020causal} studied the problem of causal effect identification when the available distributions are only \emph{partially} observable and proposed a sound algorithm to compute a causal effect in terms of the available distributions. They did not show the completeness of their algorithm. 
\cite{zhang2021bounding} considered the problem of bounding causal effects from
experiments when the assignment of treatment is randomized while the subject compliance is imperfect. Due to unobserved variables, the causal effects are not identifiable. Thus, they propose bounds over the causal effects.

\cite{tikka2020identifying} studied the causal effect identification in the presence of CSI relations, observational distribution, and the causal graph. They showed that the problem in general is NP-hard and proposed a sound algorithm for this setting, but they did not provide completeness results.

\cite{robins2020interventionist} considered the problem of identifiability for Controlled Direct Effect (CDE) that is an average causal effect when CSI relations are available. 
They showed that classical do-calculus based methods cannot determine the identifiability of CDE and proposed an algorithm that calculates CDE when additional CSI relations were available.

CSI relations have also been used for structure learning. As an example, \cite{ramanan2020causal} used CSI relations to identify the candidate set of causal relationships for learning structural causal models from observational data.
\cite{hyttinen2018structure} considered the problem of structure learning for Bayesian networks with CSI relations. They proposed orientation rules that utilizes  CSI relations to orient edges.

    \begin{table}[t]
        \caption{Table of notations} \label{tab: notation}
        \centering
        \begin{tabular}{c| c} 
            \hline 
            \textbf{Notation} & \textbf{Description}  \\
            \hline 
            $Pa_X$& Parents of $X$ \\
            $Anc_X$ & Ancestors of $X$\\
            $\Lambda_\mathcal{G}$& Set of observed roots in $\mathcal{G}$\\ 
            $\mathcal{C}$& Set of control variables \\
            $\mathcal{C}^{X}$& $ Pa_X \cap \mathcal{C}$ \\
            $D_{\mathcal{C}}$ & Domain of $\mathcal{C}$\\
            \hline 
        \end{tabular}
    \end{table}
\section{PRELIMINARIES} \label{sec: preliminaries}
    In this paper, we use capital and small letters to denote random variables and their realizations, respectively.
    Similarly, sets of random variables are denoted by bold capital letters and sets of their realizations by small bold letters. Observable and unobservable variables are represented in solid and dashed circles in the figures, respectively.
    We assume all the variables are discrete with finite domain.
 
    Let $\mathcal{G}=(\mathbf{V}, \mathbf{E})$ be a directed acyclic graph (DAG) with a finite set of vertices $\mathbf{V}$ and a set of edges $\mathbf{E}$. 
    $X\in \mathbf{V}$ is called a \emph{parent} of $Y$ if $(X,Y)\in \mathbf{E}$.
    $X$ is called an \emph{ancestor} of $Y$ if there exists a directed path from $X$ to $Y$. 
    For $X\in \mathbf{V}$, $Pa_X$ and $Anc_X$ denote the set of parents and the set of ancestors of $X$, respectively. 
    Note that $X \in Anc_X$. 
    For $\mathbf{X}\subseteq \mathbf{V}$, $Anc_{\mathbf{X}}$ denotes the union of ancestors of the variables in $\mathbf{X}$.
    We use $D_{\mathbf{X}}$ to denote the domain of $\mathbf{X}$.
    A vertex $X\in \mathbf{V}$ is called a \emph{root} if it has no parents. 
    $\Lambda_{\mathcal{G}}$ denotes the set of observed roots in $\mathcal{G}$.  
    Table \ref{tab: notation} summarizes some of the notations used in this paper.
    
    To model the environment, we assume that the variables are generated by a Structural Equation Model (SEM) \citep{pearl2009causality}. 
    In such models, each variable $X \in \mathbf{V}$ is generated as $f_X(Pa_X,\epsilon_X)$, where $f_X$ is a deterministic function and $\epsilon_X$ is the exogenous noise corresponding to $X$ such that the noise variables are jointly independent. 
    Suppose $\mathcal{M}$ is a SEM with the causal DAG $\mathcal{G}$ and the joint distribution $P^{\mathcal{M}}(\mathbf{V})$ over $\mathbf{V}= \mathbf{N} \cup \mathbf{U}$, where $\mathbf{N}$ and $\mathbf{U}$ are the set of observable and unobservable variables, respectively.
    An intervention $do(\mathbf{T=t})$ in $\mathcal{M}$ on a set $\mathbf{T}$ is defined as forcing $\mathbf{T}$ to be $\mathbf{t}$ and eliminating the impact of other variables on those in $\mathbf{T}$. 
    $P_{\mathbf{t}}(\mathbf{s}):=P(\mathbf{S=s}|do(\mathbf{T=t}))$ denotes the post-interventional distribution of $\mathbf{S}$ after the intervention $do(\mathbf{T=t})$ (\cite{pearl2009causality}).
    In the classic problem of causal effect identification, the goal is to compute $P_{\mathbf{t}}(\mathbf{s})$ from the observational distribution.

    \begin{definition}[ID from $\mathcal{G}$ \citep{pearl2009causality}] \label{def: ID from graph}
        The causal effect of $\mathbf{T}$ on $\mathbf{S}$ is said to be identifiable from $\mathcal{G}$ if for any $\mathbf{t}\in D_{\mathbf{T}}$ and $\mathbf{s} \in D_\mathbf{S}$, $P^{\mathcal{M}}_{\mathbf{t}}(\mathbf{s})$ is uniquely computable from $P^{\mathcal{M}}(\mathbf{N})$ in any SEM $\mathcal{M}$ with causal graph $\mathcal{G}$ such that $P^{\mathcal{M}}(\mathbf{n})>0$ for any $\mathbf{n} \in D_{\mathbf{N}}$.
    \end{definition}
    When an interventional distribution is identifiable, it can be uniquely computed from the joint observational distribution by a series of continuous operations (e.g., marginalization, Bayes rule, and the law of total probability). 
    Formally, \emph{uniquely computable} in Definition \ref{def: ID from graph} means that there exists a continuous operator $\mathcal{F}$ such that $\mathcal{F}(P^{\mathcal{M}}(\mathbf{N})) = P^{\mathcal{M}}_{\mathbf{t}}(\mathbf{s})$.

  
    Let $\mathbf{X}$, $\mathbf{Y}$ and $\mathbf{S}$ be three disjoint subsets of vertices in $\mathcal{G}$. $\mathbf{X}$ and $\mathbf{Y}$ are d-separated by $\mathbf{S}$, denoted by $\dsep{\mathbf{X}}{\mathbf{Y}}{\mathbf{S}}{\mathcal{G}}$, if every path between $\mathbf{X}$ and $\mathbf{Y}$ is blocked\footnote{See \cite{pearl2009causality} for definition of blocking.} by $\mathbf{S}$.    
    As we mentioned in the related work section, \cite{pearl1995causal} proposed following do-calculus rules.
   
    Rule 1 (Insertion/deletion of observations)
    \[
        P_{\mathbf{x}}(\mathbf{y} \mid \mathbf{z},\mathbf{w})=P_{\mathbf{x}}(\mathbf{y} \mid \mathbf{w}) \ \text{, if} \ \dsep{\mathbf{Y}}{\mathbf{Z}}{\mathbf{X},\mathbf{W}}{\mathcal{G}_{\overline{\mathbf{X}}}}.
    \]
    Rule 2 (Action/observation exchange)
    \[
        P_{\mathbf{x},\mathbf{z}}(\mathbf{y} \mid \mathbf{w})=P_{\mathbf{x}}(\mathbf{y} \mid \mathbf{w},\mathbf{z}) \ \text{, if} \ \dsep{\mathbf{Y}}{\mathbf{Z}}{\mathbf{X},\mathbf{W}}{\mathcal{G}_{\overline{\mathbf{X}}, \underline{\mathbf{Z}}}}.
    \]
     Rule 3 (Insertion/deletion of actions)
    \[
        P_{\mathbf{x},\mathbf{z}}(\mathbf{y} \mid \mathbf{w})=P_{\mathbf{x}}(\mathbf{y} \mid \mathbf{w}) \ \text{, if} \ \dsep{\mathbf{Y}}{\mathbf{Z}}{\mathbf{X},\mathbf{W}}{\mathcal{G}_{\overline{\mathbf{X}}, \overline{\mathbf{Z}(\mathbf{W})}}}.
    \]
    In above, $\mathcal{G}_{\overline{\mathbf{X}}}$ (or  $\mathcal{G}_{\underline{\mathbf{X}}}$) denotes the graph obtained by deleting all the in-going (or out-going) edges of the variables in $\mathbf{X}$ from $\mathcal{G}$, and $\mathbf{Z}(\mathbf{W})$ is the set of nodes in $\mathbf{Z}$ that are not ancestors of any node in $\mathbf{W}$.

\section{CONTROL VARIABLES AND IDENTIFIABILITY FROM $(\mathcal{G},\, \mathcal{L}_{\mathcal{C}})$} \label{sec: control var}
    Let $\mathbf{X},\mathbf{Y},\mathbf{S}$ be three disjoint subsets of $\mathbf{V}$.
    $\mathbf{X} \independent \mathbf{Y} \vert \mathbf{S}$ denotes the Conditional Independence (CI) of  $\textbf{X}$ and $\textbf{Y}$ given $\mathbf{S}$.
    A context-specific independence (CSI) relation is of the form $\mathbf{X} \independent \mathbf{Y} \vert \mathbf{S}_1\!=\!\mathbf{s}_1, \mathbf{S}_2$, where $\mathbf{X}, \mathbf{Y}, \mathbf{S}_1$, and $\mathbf{S}_2$ are disjoint subsets of $\mathbf{V}$ and $\mathbf{s}_1\in D_{\mathbf{S}_1}$. 
    
    
    \begin{example}\label{ex:example1}
    Consider the DAG in Figure \ref{fig: example2} with the following SEM.
    \begin{align*}
        &T = Be(0.5),&  Z = Be(0.5),\\
        &Y = TX+Z, & X = T\oplus Z,
    \end{align*}
    where $Be(q)$ and $\oplus$ denote Bernoulli distribution with parameter $q$ and logical xor, respectively. 
    In this example, $Y$ is a function of $X$ and there is no CI relation between them.
    However, the following CSI relation holds: $X \independent Y \vert T=0, Z$.
    \end{example}
    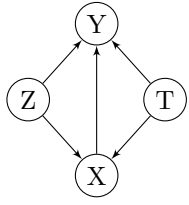
\begin{figure}[t] 
        \centering
        \tikzstyle{block} = [draw, fill=white, circle, text centered,inner sep= 0.2cm]
    	\tikzstyle{input} = [coordinate]
    	\tikzstyle{output} = [coordinate]
    	    \begin{tikzpicture}[->, auto, node distance=1.3cm,>=latex', every node/.style={inner sep=0.12cm}]
    		    \node [block, label=center:Y](Y) {};
    		    \node [block, below right= 0.6cm and 0.5cm of U, label=center:T](T) {};
        		\node [block, below left= 0.6cm and 0.5cm of U, label=center:Z](Z) {};
        		\node [block, below left= 0.6cm and 0.5cm of T, label=center:X](X) {};
        		\draw (T) to (Y);
        		\draw (Z) to (Y);
        		\draw (X) to (Y);
        		\draw (T) to (X);
                \draw (Z) to (X);
    		\end{tikzpicture}
    	\caption{Causal graph of the SEM in Example \ref{ex:example1}. }
    	\label{fig: example2}
    \end{figure}
    
    Example \ref{ex:example1} shows that, in general, CSI relations cannot be inferred merely from the causal graph. They have to be either provided as side information or inferred via additional assumptions.  
    In order to encode CSI relations succinctly into the causal graph, analogous to \cite{tikka2020identifying, pensar2015labeled}, we define a set of labels as follows.  
    \begin{definition}[Label set $\mathcal{L}_{\mathbf{C}}$]\label{def:label}
        Suppose $\mathbf{C}\subseteq \mathbf{V}$ and denote by $\mathcal{L}_{\mathbf{C}}(\mathbf{c})$ a set of edges $(Y, X)\in \mathbf{E}$ such that $X \independent$ $Y\vert \mathbf{C}=\mathbf{c}, \mathbf{S}$ for some $\mathbf{S}\subseteq \mathbf{V}\setminus\big(\{X,Y\}\cup \mathbf{C}\big)$.
        We define the label set $\mathcal{L}_{\mathbf{C}}:= (\mathcal{L}_{\mathbf{C}}(\mathbf{c})\!:\: \mathbf{c}\in D_{\mathbf{C}})$.
    \end{definition}
    Accordingly, a SEM $\mathcal{M}$ is said to be \emph{compatible} with $\mathcal{L}_{\mathbf{C}}$, if for any $(Y, X)\in \mathcal{L}_{\mathbf{C}}(\mathbf{c})$, $X$ is no longer a function of $Y$ when $\mathbf{C=c}$.
    In Example \ref{ex:example1}, $\mathcal{L}_{T}(0)=\{(X,Y)\}$ and $\mathcal{L}_{T}(1)= \varnothing$. 
    
    It is important to emphasize that \cite{tikka2020identifying} assumed all CSI relations $X \independent Y \vert \mathbf{C}=\mathbf{c}$ are given as side information, where $\mathbf{C} = Pa_X \setminus \{Y\}$. 
    This includes CSI relations with conditioning set $\mathbf{C}$ that could include unobserved variables. 
    Therefore, it is impossible to obtain all such CSI relations from merely observational distribution. 
    In this paper, we relax this assumption and restrict $\mathbf{C}$ to be a particular subset of observed variables called \textit{control variables}. 
    
    
    \begin{definition}
        The set of control variables, denoted by $\mathcal{C}$, is a subset of the observed roots, i.e., $\mathcal{C} \subseteq \Lambda_{\mathcal{G}}$.
    \end{definition}
    Knowing the label set of control variables, i.e., $\mathcal{L}_{\mathcal{C}}$, is equivalent to knowing the edges that will be omitted from $\mathcal{G}$ for different realizations of $\mathcal{C}$. For instance, in Example \ref{ex:example1}, $\mathcal{L}_{T}(0)=\{(X,Y)\}$ implies that the edge $(X,Y)$ will be deleted from the graph when $T=0$.
    We will show that such side information can be utilized to identify some causal effects that would have been non-identifiable from merely do-calculus. 

    Next, we formally define  identifiablity from $(\mathcal{G},\, \mathcal{L}_{\mathcal{C}})$
    \begin{definition}[ID from $(\mathcal{G},\, \mathcal{L}_{\mathcal{C}})$]
        Suppose $\mathbf{T,S}\subseteq \mathbf{N}\setminus \mathcal{C}$ are disjoint subsets of variables that are not control variables. 
        The causal effect of $\mathbf{T}$ on $\mathbf{S}$ is said to be identifiable from $(\mathcal{G},\, \mathcal{L}_{\mathcal{C}})$ if for any $\mathbf{t}\in D_{\mathbf{T}}$ and $\mathbf{s} \in D_\mathbf{S}$, $P^{\mathcal{M}}_{\mathbf{t}}(\mathbf{s})$ is uniquely computable from $P^{\mathcal{M}}(\mathbf{N})$ in any SEM $\mathcal{M}$ with causal graph $\mathcal{G}$ and compatible with $\mathcal{L}_{\mathcal{C}}$ such that $P^{\mathcal{M}}(\mathbf{n})>0$ for any $\mathbf{n} \in D_{\mathbf{N}}$.
    \end{definition}
    Next example demonstrates a scenario in which a non-identifiable causal effect from $\mathcal{G}$ becomes identifiable from $(\mathcal{G},\, \mathcal{L}_{\mathcal{C}})$.
    \begin{example}\label{ex:example2}
    Consider the DAG in Figure \ref{fig: example1} in which $D_T=\{0,1\}$. 
    It is known that $P_x(y)$ is not identifiable from $\mathcal{G}$. 
    However, given  $\mathcal{C} = \{T\}$,  $\mathcal{L}_{\mathcal{C}}(0)= \{(Z,W)\}$, and $\mathcal{L}_{\mathcal{C}}(1)= \{(W,Y)\}$, $P_x(y)$ becomes identifiable and is equal to $P(y)$. 
    \end{example}

    \paragraph{Problem description:}
    Suppose we are given the DAG $\mathcal{G}$ and the label set $\mathcal{L}_{\mathcal{C}}$, where $\mathcal{C}\subseteq \Lambda_{\mathcal{G}}$ is the set of control variables. 
    This paper seeks to determine when $P_{\mathbf{t}}(\mathbf{S})$ is identifiable from $(\mathcal{G}, \mathcal{L}_{\mathcal{C}})$.    

\section{MAXIMAL-REGULAR LABELS}
    \cite{pensar2015labeled} first introduced the notion of \emph{maximality} and \emph{regularity} for a certain class of graphs called labeled DAGs. 
    Herein, we extend these notions for $(\mathcal{G}, \mathcal{L}_{\mathcal{C}})$ and show how to modify $\mathcal{G}$ and $\mathcal{L}_{\mathcal{C}}$ such that $\mathcal{L}_{\mathcal{C}}$ becomes \emph{maximal-regular} with respect to $\mathcal{G}$.
    
    \begin{definition}
         $\mathcal{L}_{\mathcal{C}}$ is regular w.r.t. DAG $\mathcal{G}$ if for any $(Y,X) \in \mathcal{L}_{\mathcal{C}}$, $\mathcal{C}^X$ is not empty, where $\mathcal{C}^X$ denotes $Pa_X \cap \mathcal{C}$.
    \end{definition}
    We now show that \textit{redundant} edges from $\mathcal{G}$ can be removed such that $\mathcal{L}_\mathcal{C}$ will be regular w.r.t. the new graph.
    \begin{lemma} \label{lem: regularity}
        If $(Y, X) \in \mathcal{L}_{\mathcal{C}}(\mathbf{c})$ and $\mathcal{C}^X = \varnothing$, then  
        $X\independent$ ${Y}\vert {Pa_X\setminus\{Y\}}$ and $(Y,X)$ can be deleted from $\mathcal{G}$.
    \end{lemma}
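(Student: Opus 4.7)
The plan is to interpret the label membership structurally and then leverage the rootedness of $\mathcal{C}$ together with positivity to extend a contextual non-dependence into a global one, from which both the conditional independence and the edge removal will follow. First I would fix an arbitrary SEM $\mathcal{M}$ compatible with $\mathcal{L}_{\mathcal{C}}$ and unpack what $(Y,X) \in \mathcal{L}_{\mathcal{C}}(\mathbf{c})$ demands of its mechanism $X = f_X(Pa_X, \epsilon_X)$: in the context $\mathbf{C}=\mathbf{c}$, $f_X$ factors through the remaining parent-coordinates, i.e., for every realization of $(Pa_X, \epsilon_X)$ that is attainable with positive probability under $\mathbf{C}=\mathbf{c}$, the output of $f_X$ does not vary as the $Y$-coordinate ranges over $D_Y$.

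The central step will then be to upgrade this contextual property to an unconditional one using $\mathcal{C}^X = \varnothing$. Since $\mathcal{C} \subseteq \Lambda_{\mathcal{G}}$ consists of observed roots and none of them belongs to $Pa_X$, the map $f_X$ has no $\mathbf{C}$-coordinate in its input; moreover, the positivity $P^{\mathcal{M}}(\mathbf{n}) > 0$ on all of $D_{\mathbf{N}}$ guarantees that conditioning on $\mathbf{C}=\mathbf{c}$ does not shrink the support of $(Pa_X, \epsilon_X)$. Consequently every input to $f_X$ that is attainable in $\mathcal{M}$ is also attainable under $\mathbf{C}=\mathbf{c}$, so the non-dependence on $Y$ must hold at every input, yielding a function $\tilde{f}_X$ such that $f_X(Pa_X, \epsilon_X) = \tilde{f}_X(Pa_X\setminus\{Y\}, \epsilon_X)$ almost surely.

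With this structural simplification in hand, both conclusions follow cleanly. Replacing $f_X$ by $\tilde{f}_X$ yields an equivalent SEM whose causal graph is $\mathcal{G}'$, defined as $\mathcal{G}$ with the edge $(Y,X)$ deleted, and whose observational distribution is unchanged, so $(Y,X)$ can indeed be removed from $\mathcal{G}$. The CI $X \independent Y \mid Pa_X \setminus \{Y\}$ then follows from the local Markov property in $\mathcal{G}'$: the acyclicity of $\mathcal{G}$ forbids any directed path $X \rightsquigarrow Y$ and deleting an edge cannot produce new descendants, so $Y$ remains a non-descendant of $X$ in $\mathcal{G}'$, while $Pa_X \setminus \{Y\}$ is precisely the parent set of $X$ in $\mathcal{G}'$. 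The hard part will be the upgrade step: making the support argument watertight when $Pa_X$ contains unobserved parents, where one has to show that the joint distribution of $\mathcal{C}$ with those unobserved parents does not secretly restrict the reachable inputs of $f_X$ inside the context $\mathbf{C}=\mathbf{c}$.
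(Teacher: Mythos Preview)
Your approach is sound but follows a genuinely different route from the paper. The paper argues entirely at the distributional level: it first establishes an auxiliary CSI (its Lemma~\ref{lem: CSI proof}), namely $X\independent Y\mid \mathcal{C}=\mathbf{c},\,Pa_X\setminus(\{Y\}\cup\mathcal{C})$, by applying the local Markov property in the post-intervention model $\mathcal{M}_{\mathbf{c}}$; then it observes the d-separation $\dsep{X}{\mathcal{C}}{Pa_X}{\mathcal{G}}$ (valid because $\mathcal{C}$ consists of roots, hence non-descendants of $X$, and $\mathcal{C}^X=\varnothing$ means none of them is a parent of $X$), and invokes Rule~1 of do-calculus to drop $\mathcal{C}$ from the conditioning set, yielding $X\independent Y\mid Pa_X\setminus\{Y\}$ directly. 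The edge deletion is then read off from this CI.

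You instead work at the structural level: since $\mathcal{C}^X=\varnothing$, the mechanism $f_X$ has no $\mathcal{C}$-coordinate, so the contextual non-dependence on $Y$ must be a global non-dependence of $f_X$ on $Y$; the edge deletion is then immediate (replace $f_X$ by $\tilde f_X$), and the CI follows from local Markov in the pruned graph. The payoff of your route is that the ``edge can be deleted'' conclusion becomes transparent at the level of the SEM itself rather than a byproduct of a CI. The cost is the explicit support argument you flag. The paper's d-separation step is exactly the distributional counterpart of your ``$f_X$ has no $\mathcal{C}$-input'' observation and is how the paper avoids an explicit support discussion; that said, if one is careful about unobserved parents of $X$ whose support could vary with $\mathcal{C}$, the same subtlety you single out is implicitly present in the paper's chain of equalities $P(X\mid\mathbf c,\mathbf a,y)=P(X\mid\mathbf a,y)$ as well, so your caution is well placed.
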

    To prove Lemma \ref{lem: regularity}, we first prove the following lemma.
    \begin{lemma} \label{lem: CSI proof} 
        If $(Y, X) \in \mathcal{L}_{\mathcal{C}}(\mathbf{c})$, then 
        \[
        X\independent Y\vert \mathcal{C}=\mathbf{c}, Pa_X\setminus(\{Y\}\cup\mathcal{C}).
        \]
    \end{lemma}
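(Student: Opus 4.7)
The plan is to prove the claimed CSI directly from the structural equation for $X$, leveraging two ingredients: the compatibility of $\mathcal{M}$ with $\mathcal{L}_{\mathcal{C}}$, and the fact that $Y$ is a parent (hence a non-descendant) of $X$. Writing $Pa_X$ as the disjoint union $\mathbf{W} \cup \{Y\} \cup \mathcal{C}^X$ with $\mathbf{W} := Pa_X \setminus (\{Y\}\cup \mathcal{C})$, and fixing an arbitrary realization $(\mathbf{c},\mathbf{w})$ of $(\mathcal{C},\mathbf{W})$ consistent with the conditioning event, the task reduces to showing that the conditional distribution of $X$ given $\{\mathcal{C}=\mathbf{c},\mathbf{W}=\mathbf{w}\}$ is unchanged if one further conditions on $Y$.

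The first ingredient is the compatibility assumption. Since $(Y,X)\in \mathcal{L}_{\mathcal{C}}(\mathbf{c})$, once $\mathcal{C}$ is fixed to $\mathbf{c}$ (which in particular fixes $\mathcal{C}^X$ to the relevant sub-vector of $\mathbf{c}$), the structural equation $f_X(Pa_X,\epsilon_X)$ ceases to depend on its $Y$-argument. Concretely, I would exhibit a reduced deterministic map $\tilde{f}_X$ such that, in the event $\{\mathcal{C}=\mathbf{c},\mathbf{W}=\mathbf{w}\}$, one has $X=\tilde{f}_X(\mathbf{w},\epsilon_X)$ regardless of the realized value of $Y$.

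The second ingredient is the DAG structure. Because $(Y,X)$ is an edge of $\mathcal{G}$, $X$ is not an ancestor of $Y$; similarly $X$ cannot be an ancestor of any of its own parents in $\mathbf{W}$ nor of any root in $\mathcal{C}$. Consequently $\epsilon_X$ does not appear in the structural expression for any variable in $\{Y\}\cup\mathbf{W}\cup\mathcal{C}$, and the joint independence of the exogenous noises postulated by the SEM then yields $\epsilon_X \independent (Y,\mathcal{C},\mathbf{W})$.

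Putting the two ingredients together, conditional on $\{\mathcal{C}=\mathbf{c},\mathbf{W}=\mathbf{w}\}$ the event $\{X=x\}$ coincides with $\{\epsilon_X \in A_{x,\mathbf{w}}\}$ for the preimage $A_{x,\mathbf{w}}:=\{e:\tilde{f}_X(\mathbf{w},e)=x\}$, and the probability of this $\epsilon_X$-event is unaffected by further conditioning on $Y$ by the noise-independence argument, producing the desired equality of conditional probabilities and hence the CSI. The one delicate step is formalizing the compatibility clause: the informal phrase ``$X$ is no longer a function of $Y$ when $\mathcal{C}=\mathbf{c}$'' must be read as asserting the existence of a \emph{single} reduced map $\tilde{f}_X$ that agrees with $f_X$ simultaneously for every value of $Y$, and it is this uniform (rather than pointwise-in-$y$) reduction that makes the final equality go through.
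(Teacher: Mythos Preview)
Your argument is correct. It differs from the paper's proof mainly in the level of abstraction. The paper constructs the post-interventional SEM $\mathcal{M}_{\mathbf{c}}$ obtained from $\mathcal{M}$ by $do(\mathcal{C}=\mathbf{c})$, observes that the graph $\mathcal{H}$ with $(Y,X)$ removed is a valid causal graph for $\mathcal{M}_{\mathbf{c}}$, and then invokes the local Markov property in $\mathcal{H}$ (using that $X\notin Anc_Y$) together with the identity $P^{\mathcal{M}_{\mathbf{c}}}(\mathbf{V}\setminus\mathcal{C})=P^{\mathcal{M}}(\mathbf{V}\setminus\mathcal{C}\mid\mathcal{C}=\mathbf{c})$, which holds because $\mathcal{C}$ consists of roots. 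You instead stay inside the original SEM and argue directly at the level of structural equations: the compatibility assumption collapses $f_X$ to a map $\tilde f_X(\mathbf{w},\epsilon_X)$ on the conditioning event, and the mutual independence of exogenous noises gives $\epsilon_X\independent(Y,\mathcal{C},\mathbf{W})$ since $X$ is not an ancestor of any variable in that set. Your route is more elementary and self-contained (no appeal to Markov properties or intervention semantics), while the paper's route packages the same facts through existing graphical-model machinery and makes the role of the modified graph $\mathcal{H}$ explicit, which is convenient because $\mathcal{H}$ is reused elsewhere in the paper.
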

    \begin{proof}
        Let $\mathcal{H}$ be the DAG obtained by removing $(Y,X)$ from $\mathcal{G}$, and $\mathcal{M}_{\mathbf{c}}$ be the SEM over $\mathbf{V \setminus \mathcal{C}}$ when $do(\mathcal{C} = \mathbf{c})$.
        As $(Y, X) \in \mathcal{L}_{\mathcal{C}}(\mathbf{c})$, $\mathcal{H}$ is a causal graph for $\mathcal{M}_{\mathbf{c}}$. 
        Moreover, since $\mathcal{C}\subseteq \Lambda_{\mathcal{G}}$, we have
        \begin{equation} \label{eq: proof lemma CSI}
            P^{\mathcal{M}_{\mathbf{c}}}(\mathbf{V} \setminus \mathcal{C}) = P^{\mathcal{M}}(\mathbf{V} \setminus \mathcal{C} \vert \mathcal{C}=\mathbf{c}).
        \end{equation}
        Because $(Y,X) \in \mathbf{E}$ and $\mathcal{G}$ is a DAG, $X \notin Anc_Y$. Thus, local Markov property (\cite{pearl2009causality}) for $\mathcal{M}_{\mathbf{c}}$ implies that  
        $
            X \independent_{P^{\mathcal{M}_{\mathbf{c}}}} Y\vert  Pa_X\setminus(\{Y\}\cup\mathcal{C}).
        $
        Combining this with Equation \eqref{eq: proof lemma CSI} implies that $X\independent Y\vert \mathcal{C}=\mathbf{c}, Pa_X\setminus(\{Y\}\cup\mathcal{C})$.
    \end{proof} 
    Next, we prove Lemma \ref{lem: regularity}. 
    \begin{proof}
        Let $\mathbf{A} = Pa_X \setminus \{Y\}$. 
        Since $(Y, X) \in \mathcal{L}_{\mathcal{C}}(\mathbf{c})$, Lemma \ref{lem: CSI proof} implies that for any $\mathbf{a}\in D_{\mathbf{A}}$ and $y_1,y_2 \in D_Y$, 
        \[
            P(X\vert \mathbf{c}, \mathbf{a},y_1)= P(X\vert \mathbf{c}, \mathbf{a},y_2).
        \]
        Since $\dsep{X}{\mathcal{C}}{Pa_X}{\mathcal{G}}$, Rule 1 of do-calculus implies that 
        $P(X\vert \mathbf{c}, \mathbf{a},y_i) = P(X\vert \mathbf{a},y_i)$, for $i\in \{1,2\}$. Thus, 
        \[
            P(X\vert \mathbf{a},y_1)= P(X\vert \mathbf{a},y_2).
        \]
        Hence, $X \independent Y \vert \mathbf{A}$.
    \end{proof}
    
   Lemma \ref{lem: regularity} implies that if an edge $(Y,X)$ belongs to $\mathcal{L}_\mathcal{C}$ and $\mathcal{C}^X=\varnothing$, then $(Y,X)$ can be removed from the causal graph, i.e., $\mathcal{G}'=(\mathbf{V},\mathbf{E}\setminus\{(Y,X)\})$ is still a causal graph for the joint distribution $P(\mathbf{V})$. 
    Therefore, by removing all such edges from $\mathcal{G}$, we obtain a DAG such that for all $(Y,X) \in \mathcal{L}_\mathcal{C}$, $\mathcal{C}^X$ is not empty. 
    
    \begin{example}\label{ex:example3}
    Consider the causal DAG in Figure \ref{fig: example1} with control variable $\mathcal{C} = \{T\}$, where $D_T=\{0,1\}$ and label set $\mathcal{L}_\mathcal{C}=(\mathcal{L}_\mathcal{C}(0),\mathcal{L}_\mathcal{C}(1))$, where $\mathcal{L}_{\mathcal{C}}(0)= \{(Z,W)\}$ and $\mathcal{L}_{\mathcal{C}}(1)= \{(W,Y),(X,Z)\}$. This label set is not regular w.r.t. $\mathcal{G}$ because $(X,Z)\in\mathcal{L}_{\mathcal{C}}(1)$ but $\mathcal{C}^Z = \varnothing$. However, by removing edge $(X,Z)$ from the DAG, $\mathcal{L}_{\mathcal{C}}$ becomes regular w.r.t. the new DAG.
    \end{example}
    
    The following lemma shows that only the realizations of $\mathcal{C}^X$, e.g., $\mathbf{c}^X$, are relevant to determine whether an edge $(Y, X)$ belongs to  $\mathcal{L}_{\mathcal{C}}$ or not.
    \begin{lemma} \label{lem: maximality}
        If $(Y, X) \in \mathcal{L}_{\mathcal{C}}(\mathbf{c})$ and $\mathcal{C}^X \neq \varnothing$, then 
        $ (Y, X) \in \mathcal{L}_{\mathcal{C}^X}(\mathbf{c}^X)$.
    \end{lemma}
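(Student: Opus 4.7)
The plan is to mimic the argument used for Lemma \ref{lem: CSI proof}, but to intervene on only $\mathcal{C}^X$ rather than on the full set $\mathcal{C}$. The crucial observation is that the structural function $f_X$ generating $X$ takes as input only the variables in $Pa_X$ together with the noise $\epsilon_X$, so within $\mathcal{C}$ the only variables appearing as arguments of $f_X$ are those in $\mathcal{C}^X = Pa_X \cap \mathcal{C}$. The compatibility requirement that $X$ be no longer a function of $Y$ whenever $\mathcal{C} = \mathbf{c}$ therefore really constrains $f_X$ only at the restriction $\mathcal{C}^X = \mathbf{c}^X$: the values taken by $\mathcal{C} \setminus \mathcal{C}^X$ cannot matter, since these variables are not arguments of $f_X$. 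Consequently, fixing only $\mathcal{C}^X = \mathbf{c}^X$ already forces $f_X$ to ignore its $Y$-input.

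Concretely, I would fix any SEM $\mathcal{M}$ with causal graph $\mathcal{G}$ compatible with $\mathcal{L}_\mathcal{C}$ and consider the reduced SEM $\mathcal{M}_{\mathbf{c}^X}$ over $\mathbf{V} \setminus \mathcal{C}^X$ obtained by $do(\mathcal{C}^X = \mathbf{c}^X)$. By the observation above, in $\mathcal{M}_{\mathbf{c}^X}$ the variable $X$ no longer depends on $Y$, so the DAG $\mathcal{H}'$ formed from $\mathcal{G}$ by removing the nodes in $\mathcal{C}^X$ together with the edge $(Y,X)$ is a causal graph for $\mathcal{M}_{\mathbf{c}^X}$. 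Because $\mathcal{C}^X \subseteq \mathcal{C} \subseteq \Lambda_\mathcal{G}$, the identity used in the proof of Lemma \ref{lem: CSI proof} still applies and yields
\[
    P^{\mathcal{M}_{\mathbf{c}^X}}(\mathbf{V} \setminus \mathcal{C}^X) = P^{\mathcal{M}}(\mathbf{V} \setminus \mathcal{C}^X \mid \mathcal{C}^X = \mathbf{c}^X).
\]
Applying the local Markov property to $X$ in $\mathcal{H}'$ (using $X \notin Anc_Y$, so that $Y$ remains a non-descendant of $X$) gives $X \independent Y \mid Pa_X \setminus (\{Y\} \cup \mathcal{C}^X)$ under $P^{\mathcal{M}_{\mathbf{c}^X}}$, which rewrites as
\[
    X \independent Y \mid \mathcal{C}^X = \mathbf{c}^X,\; Pa_X \setminus (\{Y\} \cup \mathcal{C}^X)
\]
under $P^{\mathcal{M}}$. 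Taking $\mathbf{S}' := Pa_X \setminus (\{Y\} \cup \mathcal{C}^X)$ as the witnessing conditioning set in Definition \ref{def:label} then yields $(Y,X) \in \mathcal{L}_{\mathcal{C}^X}(\mathbf{c}^X)$, as desired.

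The main subtle point, and the step I would write out most carefully, is the transfer of compatibility from $\mathcal{L}_\mathcal{C}$ at the tuple $\mathbf{c}$ down to its sub-tuple $\mathbf{c}^X$. Since $\mathcal{C} \setminus \mathcal{C}^X$ lies entirely outside the argument list of $f_X$, the assertion ``$f_X$ does not depend on its $Y$-coordinate when $\mathcal{C}^X = \mathbf{c}^X$'' is structurally equivalent to the hypothesis ``$f_X$ does not depend on its $Y$-coordinate when $\mathcal{C} = \mathbf{c}$,'' and requires no independent assumption about the other coordinates of $\mathbf{c}$. Once this equivalence is spelled out, the rest of the argument is a direct transcription of the proof of Lemma \ref{lem: CSI proof} with $\mathcal{C}$ replaced by $\mathcal{C}^X$ throughout; the hypothesis $\mathcal{C}^X \neq \varnothing$ only ensures that we are not in the degenerate situation already handled by Lemma \ref{lem: regularity}.
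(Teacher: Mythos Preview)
Your argument is correct, but it follows a genuinely different route from the paper's proof. The paper first invokes Lemma~\ref{lem: CSI proof} as a black box to obtain $P_{\mathbf{c}}(X\mid \mathbf{a},y_1)=P_{\mathbf{c}}(X\mid \mathbf{a},y_2)$ with $\mathbf{A}=Pa_X\setminus(\mathcal{C}^X\cup\{Y\})$, and then applies Rule~3 of do-calculus (using that $\mathcal{C}\setminus\mathcal{C}^X$ is d-separated from $X$ by $Pa_X$ in $\mathcal{G}$, since control variables are roots) to reduce the intervention from $\mathbf{c}$ to $\mathbf{c}^X$, yielding $P_{\mathbf{c}^X}(X\mid \mathbf{a},y_1)=P_{\mathbf{c}^X}(X\mid \mathbf{a},y_2)$. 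You instead work directly at the SEM level: since $f_X$ only takes inputs from $Pa_X$, the compatibility hypothesis ``$X$ is no longer a function of $Y$ when $\mathcal{C}=\mathbf{c}$'' already holds whenever $\mathcal{C}^X=\mathbf{c}^X$, so you can rerun the proof of Lemma~\ref{lem: CSI proof} with $\mathcal{C}^X$ in place of $\mathcal{C}$ throughout. Your approach is slightly more elementary (it avoids appealing to do-calculus and makes the structural reason for the lemma transparent), while the paper's is more modular (it reuses Lemma~\ref{lem: CSI proof} verbatim and isolates the reduction step as a clean application of Rule~3). Either way, the conclusion and the witnessing conditioning set $Pa_X\setminus(\{Y\}\cup\mathcal{C}^X)$ coincide.
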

    \begin{proof}
        Suppose $\mathbf{A} = Pa_X \setminus (\mathcal{C}^X \cup \{Y\})$. 
        Since $(Y, X) \in \mathcal{L}_{\mathcal{C}}(\mathbf{c})$, Lemma \ref{lem: CSI proof} implies that for any $\mathbf{a}\in D_{\mathbf{A}}$ and $y_1,y_2 \in D_Y$, 
        \[
            P_{\mathbf{c}}(X\vert \mathbf{a}, y_1) = P_{\mathbf{c}}(X\vert \mathbf{a}, y_2).
        \]
        Because variables in $\mathcal{C}$ have no parents then $\mathcal{G}_{\overline{\mathcal{C}^X}, \overline{\mathcal{C} \setminus \mathcal{C}^X (Pa_X \setminus \mathcal{C}^X)}}=\mathcal{G}$ and 
        $\dsep{X}{\mathcal{C} \setminus \mathcal{C}^X}{Pa_X }{\mathcal{G}}$. Hence, 
        rule 3 of do-calculus implies that $P_{\mathbf{c}}(X\vert Pa_X \setminus \mathcal{C}^X) = P_{\mathbf{c}^X}(X\vert Pa_X\setminus \mathcal{C}^X)$.
        By combining the above equations, we obtain
        \begin{align*}
            P_{\mathbf{c}^X}(X\vert \mathbf{a},y_1) 
            = P_{\mathbf{c}^X}(X\vert \mathbf{a},y_2).
        \end{align*}
        This concludes that $ (Y, X) \in \mathcal{L}_{\mathcal{C}^X}(\mathbf{c}^X)$.
    \end{proof}
    \begin{corollary} \label{cor: maximality}
        Suppose $(Y, X) \in \mathcal{L}_{\mathcal{C}}(\mathbf{c}_1)$ for some $\mathbf{c}_1\in D_\mathcal{C}$. Let $\mathbf{c}_2\in D_\mathcal{C}$ such that $\mathbf{c}_1\neq \mathbf{c}_2$ but $\mathbf{c}_1^{\mathbf{X}}=\mathbf{c}_2^{\mathbf{X}}$, then we can add $(Y, X)$ to $\mathcal{L}_{\mathcal{C}}(\mathbf{c}_2)$.
    \end{corollary}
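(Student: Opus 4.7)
The plan is to descend from $\mathcal{C}$ to the smaller control set $\mathcal{C}^X$ using Lemma \ref{lem: maximality}, extract a usable CSI via Lemma \ref{lem: CSI proof}, and then re-ascend to $\mathcal{C}$ by combining that CSI with a standard d-separation statement about the remaining control variables. The goal is to exhibit a set $\mathbf{S}\subseteq \mathbf{V}\setminus(\{X,Y\}\cup\mathcal{C})$ for which $X\independent Y\,\vert\,\mathcal{C}=\mathbf{c}_2,\mathbf{S}$ holds.

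First, I invoke Lemma \ref{lem: maximality} on the hypothesis $(Y,X)\in\mathcal{L}_{\mathcal{C}}(\mathbf{c}_1)$ (assuming $\mathcal{C}^X\neq\varnothing$, which is the case of interest after Lemma \ref{lem: regularity} has been applied) to obtain $(Y,X)\in\mathcal{L}_{\mathcal{C}^X}(\mathbf{c}_1^X)$. Using $\mathbf{c}_1^X=\mathbf{c}_2^X$, this is the same as $(Y,X)\in\mathcal{L}_{\mathcal{C}^X}(\mathbf{c}_2^X)$. Then applying Lemma \ref{lem: CSI proof} to the label set $\mathcal{L}_{\mathcal{C}^X}$ (with $\mathcal{C}^X$ playing the role of the control set) yields
\[
X \independent Y \,\vert\, \mathcal{C}^X=\mathbf{c}_2^X,\ Pa_X\setminus(\{Y\}\cup\mathcal{C}^X).
\]
Separately, I note the ordinary d-separation $\dsep{X}{\mathcal{C}\setminus\mathcal{C}^X}{Pa_X}{\mathcal{G}}$: since every variable in $\mathcal{C}\subseteq\Lambda_{\mathcal{G}}$ is a root of $\mathcal{G}$, the members of $\mathcal{C}\setminus\mathcal{C}^X$ are non-descendants of $X$ that lie outside $Pa_X$, so the local Markov property (equivalently, any directed path from such a root to $X$ must enter $X$ through some parent and is therefore blocked) applies.

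To conclude, let $\mathbf{A}=Pa_X\setminus(\{Y\}\cup\mathcal{C})$, which is disjoint from $\{X,Y\}\cup\mathcal{C}$. For every $\mathbf{a}\in D_{\mathbf{A}}$ and $y\in D_Y$, the conditioning set of $P(X\,\vert\,\mathcal{C}=\mathbf{c}_2,\mathbf{a},y)$ contains all of $Pa_X$, so the d-separation above lets me drop $\mathcal{C}\setminus\mathcal{C}^X$ and conclude
\[
P(X\,\vert\,\mathcal{C}=\mathbf{c}_2,\mathbf{a},y)=P(X\,\vert\,\mathcal{C}^X=\mathbf{c}_2^X,\mathbf{a},y),
\]
and the right-hand side is independent of $y$ by the CSI above. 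Hence $X\independent Y\,\vert\,\mathcal{C}=\mathbf{c}_2,\mathbf{A}$, which by the definition of $\mathcal{L}_{\mathcal{C}}(\mathbf{c}_2)$ gives $(Y,X)\in\mathcal{L}_{\mathcal{C}}(\mathbf{c}_2)$, as required.

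The main obstacle is the bookkeeping in the combination step: Lemma \ref{lem: CSI proof} only supplies a CSI conditioned on $\mathcal{C}^X=\mathbf{c}_2^X$, and one must be sure that fixing the remaining control variables in $\mathcal{C}\setminus\mathcal{C}^X$ to their $\mathbf{c}_2$-values does not disturb the independence of $X$ and $Y$. The d-separation is exactly what licenses this move, but it has to be applied on a conditioning set that already contains all of $Pa_X$; once that is arranged, the rest of the argument is a short chain of equalities.
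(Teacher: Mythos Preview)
Your proof is correct. The paper states this result as an immediate corollary of Lemma~\ref{lem: maximality} and gives no separate proof; your argument spells out the intended reasoning, reusing the same d-separation $\dsep{X}{\mathcal{C}\setminus\mathcal{C}^X}{Pa_X}{\mathcal{G}}$ that already appears in the proof of Lemma~\ref{lem: maximality} to pass back from $\mathcal{C}^X$ to $\mathcal{C}$ at the new context $\mathbf{c}_2$.
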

    
    \begin{definition}[Maximal-regular]
        We say $\mathcal{L}_{\mathcal{C}}$ is \emph{maximal} w.r.t. $\mathcal{G}$ if we cannot add any new label to $\mathcal{L}_{\mathcal{C}}$ using Corollary \ref{cor: maximality}. Label set $\mathcal{L}_{\mathcal{C}}$ is called maximal-regular w.r.t. $\mathcal{G}$ if it is both maximal and regular w.r.t. $\mathcal{G}$.
    \end{definition}
     
    \begin{example}
    Consider the causal graph in Figure \ref{fig: example1} with $D_T=\{0,1\}$, $\mathcal{C} = \{T\}$, $\mathcal{L}_{\mathcal{C}}(0)= \{(Z,W)\}$ and  $\mathcal{L}_{\mathcal{C}}(1)= \{(W,Y)\}$.
    In this case, $\mathcal{L}_\mathcal{C}$ is maximal-regular w.r.t. this causal graph. 
    \end{example}


\section{MAIN RESULT} \label{sec: main result}
    Next theorem states that the identifiablity of $P_{\mathbf{t}}(\mathbf{s})$ given $(\mathcal{G}, \mathcal{L}_{\mathcal{C}})$ is equivalent to the identifiability of $P_{\mathbf{t}}(\mathbf{s})$ from a list of DAGs. Note that \cite{tian2003ID} introduced a sound and complete algorithm for checking whether $P_{\mathbf{t}}(\mathbf{s})$ is identifiable from a DAG. Therefore, using the next theorem, we can develop a sound and complete algorithm for identifiability of $P_{\mathbf{t}}(\mathbf{s})$ from $(\mathcal{G}, \mathcal{L}_{\mathcal{C}})$.

    \begin{theorem} \label{thm: main}
        Suppose $\mathcal{G}=(\mathbf{V},\mathbf{E})$ is a DAG with observable variables $\mathbf{N} \subseteq \mathbf{V}$, and let $\mathcal{C}\subseteq \Lambda_{\mathcal{G}}$ and $\mathbf{T},\mathbf{S} \subseteq \mathbf{N}\setminus \mathcal{C}$ be three disjoint subsets.
        Furthermore, suppose the set of labels $\mathcal{L}_{\mathcal{C}}$ is maximal-regular w.r.t. $\mathcal{G}$. 
        Causal effect of $\mathbf{T}$ on $\mathbf{S}$ is identifiable from $(\mathcal{G}, \mathcal{L}_{\mathcal{C}})$ 
        if and only if the causal effect of $\mathbf{T}$ on $\mathbf{S}$ is identifiable from 
        $\mathcal{G}_{\mathbf{c}}=(\mathbf{V}', \mathbf{E}'\setminus \mathcal{L}_{\mathcal{C}}(\mathbf{c}))$ 
        for every $\mathbf{c} \in D_{\mathcal{C}}$, where 
        $\mathbf{V}' = \mathbf{V} \setminus \mathcal{C}$ and $\mathbf{E}' = \mathbf{E} \cap (\mathbf{V}' \times \mathbf{V}')$.
    \end{theorem}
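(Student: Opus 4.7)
The plan is to prove the two implications separately. The ``if'' direction (per-$\mathbf{c}$ identifiability implies identifiability from $(\mathcal{G},\mathcal{L}_{\mathcal{C}})$) proceeds constructively by combining do-calculus with the identifying functionals on each $\mathcal{G}_{\mathbf{c}}$, whereas the ``only if'' direction is proved by contrapositive, invoking the paper's approximation technique to transport a non-identifiability witness in some $\mathcal{G}_{\mathbf{c}^*}$ to one in $(\mathcal{G},\mathcal{L}_{\mathcal{C}})$.

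For the forward direction the key identity is
\[
P^{\mathcal{M}}_{\mathbf{t}}(\mathbf{s})=\sum_{\mathbf{c}\in D_{\mathcal{C}}}P^{\mathcal{M}}(\mathbf{c})\,P^{\mathcal{M}}_{\mathbf{t},\mathbf{c}}(\mathbf{s}),
\]
which follows from Rules~2 and~3 of do-calculus, since $\mathcal{C}\subseteq\Lambda_{\mathcal{G}}$ consists of roots disjoint from $\mathbf{T}$ and is therefore trivially d-separated from $\mathbf{T}$ after cutting the incoming edges of $\mathbf{T}$, and from $\mathbf{S}$ given $\mathbf{T}$ after additionally cutting the outgoing edges of $\mathcal{C}$. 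The inner term $P^{\mathcal{M}}_{\mathbf{t},\mathbf{c}}(\mathbf{s})$ coincides with the causal effect of $\mathbf{T}$ on $\mathbf{S}$ in the sub-SEM $\mathcal{M}_{\mathbf{c}}$ obtained via $do(\mathcal{C}=\mathbf{c})$; its causal graph is precisely $\mathcal{G}_{\mathbf{c}}$ because the edges of $\mathcal{L}_{\mathcal{C}}(\mathbf{c})$ vanish under this intervention by compatibility of $\mathcal{M}$ with $\mathcal{L}_{\mathcal{C}}$, and its observable marginal equals $P^{\mathcal{M}}(\mathbf{N}\setminus\mathcal{C}\mid\mathbf{c})$, which is strictly positive by the positivity hypothesis. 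Applying the identifying operator $\mathcal{F}_{\mathbf{c}}$ provided by the assumption then gives $P^{\mathcal{M}}_{\mathbf{t},\mathbf{c}}(\mathbf{s})=\mathcal{F}_{\mathbf{c}}(P^{\mathcal{M}}(\mathbf{N}\setminus\mathcal{C}\mid\mathbf{c}))$, and the outer sum assembles a continuous functional of $P^{\mathcal{M}}(\mathbf{N})$ that returns $P^{\mathcal{M}}_{\mathbf{t}}(\mathbf{s})$.

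For the reverse direction I would pick a $\mathbf{c}^*\in D_{\mathcal{C}}$ witnessing non-identifiability and extract SEMs $\mathcal{M}'_1,\mathcal{M}'_2$ on $\mathcal{G}_{\mathbf{c}^*}$ sharing a strictly positive $P(\mathbf{N}\setminus\mathcal{C})$ but satisfying $P^{\mathcal{M}'_1}_{\mathbf{t}}(\mathbf{s})\neq P^{\mathcal{M}'_2}_{\mathbf{t}}(\mathbf{s})$. I would then extend each $\mathcal{M}'_i$ to an SEM on $\mathcal{G}$ compatible with $\mathcal{L}_{\mathcal{C}}$ as follows: for each $X\in\mathbf{V}\setminus\mathcal{C}$, set the restriction of $f_X$ at $\mathcal{C}^X=(\mathbf{c}^*)^X$ equal to the corresponding function of $\mathcal{M}'_i$---this respects the label constraints because maximality of $\mathcal{L}_{\mathcal{C}}$ makes the active labels at $X$ depend only on $\mathcal{C}^X$, so compatibility is inherited from $\mathcal{M}'_i$---and at every other value of $\mathcal{C}^X$, as well as for the variables in $\mathcal{C}$, use a fixed positive reference SEM respecting the remaining labels, identical across $i=1,2$.

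The main obstacle is that whenever $\mathbf{c}\neq\mathbf{c}^*$ but $\mathbf{c}^X=(\mathbf{c}^*)^X$ at some $X$, the conditional $P^{\mathcal{M}_i}(\mathbf{N}\setminus\mathcal{C}\mid\mathbf{c})$ still depends on $i$, so the two extended SEMs need not agree exactly on $P(\mathbf{N})$. This is precisely where the paper's approximation technique should enter. I would parameterize the construction by $\alpha\in(0,1)$, driving the independent marginals of each coordinate of $\mathcal{C}$ to concentrate at the corresponding value in $\mathbf{c}^*$ so that $P^{\mathcal{M}_\alpha^i}(\mathcal{C}=\mathbf{c}^*)\to 1$ while every $P(\mathbf{c})$ remains strictly positive for $\alpha<1$. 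Then $\|P^{\mathcal{M}_\alpha^1}(\mathbf{N})-P^{\mathcal{M}_\alpha^2}(\mathbf{N})\|\to 0$ because the discrepancy is supported on $\{\mathcal{C}\neq\mathbf{c}^*\}$ whose mass vanishes, whereas $P^{\mathcal{M}_\alpha^i}_{\mathbf{t}}(\mathbf{s})\to P^{\mathcal{M}'_i}_{\mathbf{t}}(\mathbf{s})$ by linearity in the mixing. A putative continuous identifying operator $\mathcal{F}$ for $(\mathcal{G},\mathcal{L}_{\mathcal{C}})$ would thus force $\lim_{\alpha\to 1}\mathcal{F}(P^{\mathcal{M}_\alpha^1}(\mathbf{N}))=\lim_{\alpha\to 1}\mathcal{F}(P^{\mathcal{M}_\alpha^2}(\mathbf{N}))$, contradicting $P^{\mathcal{M}'_1}_{\mathbf{t}}(\mathbf{s})\neq P^{\mathcal{M}'_2}_{\mathbf{t}}(\mathbf{s})$; making this continuity argument rigorous as $P^{\mathcal{M}_\alpha^i}(\mathbf{N})$ approaches the boundary of the probability simplex is the technical heart of the proof.
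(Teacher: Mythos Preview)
Your proposal is correct and follows essentially the same route as the paper: the sufficiency direction is identical, and for necessity the paper likewise extends the two witnessing SEMs by keeping the $\mathcal{M}'_i$-equations whenever $\mathcal{C}^X=(\mathbf{c}^*)^X$ and using a fixed $i$-independent mechanism otherwise (the paper simply takes $X$ uniform on $D_X$, which trivially respects every label and guarantees positivity), then sets $P(\mathcal{C}=\mathbf{c}^*)=1-\epsilon$ and contrasts the bound $|P^{\mathcal{M}_1^\epsilon}(\mathbf{N})-P^{\mathcal{M}_2^\epsilon}(\mathbf{N})|\le\epsilon$ with an interventional gap of at least $(1-\epsilon)\delta-2\epsilon$ to contradict continuity of $\mathcal{F}$. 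The boundary subtlety you flag is treated only informally in the paper as well.
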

    \begin{proof}
        \emph{Sufficiency:}
        Suppose $\mathcal{M}$ is a SEM with causal graph $\mathcal{G}$ and compatible with label set $\mathcal{L}_{\mathcal{C}}$.
        For every $\mathbf{c}\in D_{\mathcal{C}}$, we construct a SEM $\mathcal{M}_{\mathbf{c}}$ over $\mathbf{V}'$ by setting $\mathcal{C}$ to be $\mathbf{c}$ in the equations of $\mathcal{M}$. 
        Hence, $\mathcal{G}_{\mathbf{c}}=(\mathbf{V}', \mathbf{E}'\setminus \mathcal{L}_{\mathcal{C}}(\mathbf{c}))$ is a causal graph for $\mathcal{M}_{\mathbf{c}}$ and 
        \begin{equation} \label{eq: new observation dist}
        P^{\mathcal{M}_{\mathbf{c}}}(\mathbf{N}\setminus \mathcal{C})
        = P^{\mathcal{M}}_\mathbf{c}(\mathbf{N} \setminus \mathcal{C})
        = P^{\mathcal{M}}(\mathbf{N} \setminus \mathcal{C} \mid \mathbf{c}).
        \end{equation}
        The above relations hold due to the fact that $\mathcal{C} \subseteq \Lambda_{\mathcal{G}}$, that is, intervened variables do not have parents.
        As $P_{\mathbf{t}}(\mathbf{S})$ is identifiable from $\mathcal{G}_{\mathbf{c}}$, 
        $P^{\mathcal{M}_{\mathbf{c}}}_{\mathbf{t}}(\mathbf{S})$ is uniquely computable from 
        $P^{\mathcal{M}_{\mathbf{c}}}(\mathbf{N}\setminus \mathcal{C})$, and therefore, from 
        $P^{\mathcal{M}}(\mathbf{N})$.
        From the definition of $\mathcal{M}_{\mathbf{c}}$, we have
        \[ 
            P^{\mathcal{M}_{\mathbf{c}}}_{\mathbf{t}}(\mathbf{S})
            = P^{\mathcal{M}}_{\mathbf{c}, \mathbf{t}}(\mathbf{S})
            = P^{\mathcal{M}}_{\mathbf{t}}(\mathbf{S} \mid \mathbf{c}).
        \]
        Hence, $P^{\mathcal{M}}_{\mathbf{t}}(\mathbf{S} \mid \mathbf{c})$ is uniquely computable from $P^{\mathcal{M}}(\mathbf{N})$ for every $\mathbf{c}\in D_{\mathcal{C}}$. 
        On the other hand, for any $\mathbf{t}\in D_{\mathbf{T}}$, we have
        \begin{equation} \label{eq: if part}
        \begin{split}
            P^{\mathcal{M}}_{\mathbf{t}}(\mathbf{S})
            &= \sum_{\mathbf{c}\in D_{\mathcal{C}}} P^{\mathcal{M}}_{\mathbf{t}}(\mathbf{S} \mid \mathbf{c}) P^{\mathcal{M}}_{\mathbf{t}}(\mathbf{c}) \\
            &= \sum_{\mathbf{c}\in D_{\mathcal{C}}} P^{\mathcal{M}}_{\mathbf{t}}(\mathbf{S} \mid \mathbf{c}) P^{\mathcal{M}}(\mathbf{c}). 
        \end{split}
        \end{equation}
        Equation \eqref{eq: if part} implies that $P^{\mathcal{M}}_{\mathbf{t}}(\mathbf{S})$ is uniquely computable from $P^{\mathcal{M}}(\mathbf{N})$ and the causal effect of $\mathbf{T}$ on $\mathbf{S}$ is identifiable from $(\mathcal{G}, \mathcal{L}_{\mathcal{C}})$. Note that the functional that maps $P^{\mathcal{M}}(\mathbf{N})$ to $P^{\mathcal{M}}_{\mathbf{t}}(\mathbf{S})$ in \eqref{eq: if part} is continuous. 
        
        \emph{Necessity:}
        For any $\mathbf{c}_0\in D_{\mathcal{C}}$, we need to show that $P_{\mathbf{t}}(\mathbf{S})$ is identifiable from $\mathcal{G}_{\mathbf{c}_0}$ knowing that $P_{\mathbf{t}}(\mathbf{S})$ is identifiable from $(\mathcal{G}, \mathcal{L}_{\mathcal{C}})$. 
        We use proof by contradiction. 
        Suppose there exists two SEMs $\mathcal{M}_1$ and $\mathcal{M}_2$ over $\mathbf{V}'$ with causal graph $\mathcal{G}_{\mathbf{c}_0}$ such that $P^{\mathcal{M}_1}(\mathbf{N \setminus }\mathcal{C}) = P^{\mathcal{M}_2}(\mathbf{N \setminus} \mathcal{C})>0$, but there exists $\mathbf{t}\in D_{\mathbf{T}}$ and $\mathbf{s} \in D_\mathbf{S}$ such that $P^{\mathcal{M}_1}_{\mathbf{t}}(\mathbf{s}) \neq P^{\mathcal{M}_2}_{\mathbf{t}}(\mathbf{s})$.
        Let $\delta = |P^{\mathcal{M}_1}_{\mathbf{t}}(\mathbf{s})- P^{\mathcal{M}_2}_{\mathbf{t}}(\mathbf{s})|>0$. 
        We now construct two SEMs $\mathcal{M}_i^{\epsilon}$ for $i\in \{1,2\}$ over $\mathbf{V}$, where $\epsilon>0$ is a small number.
        
        First, we define $P^{\mathcal{M}_i^{\epsilon}}(\mathcal{C}):= \prod_{C\in \mathcal{C}} P^{\mathcal{M}_i^{\epsilon}}(C)$.  $\{P^{\mathcal{M}_i^{\epsilon}}(C): C\in\mathcal{C}\}$ are selected such that $P^{\mathcal{M}_i^{\epsilon}}(C=c)$ is positive for all $c\in D_C$
        and $P^{\mathcal{M}_i^{\epsilon}}(\mathbf{c}_0) = 1 - \epsilon$. Next, for each $X \in \mathbf{V \setminus } \mathcal{C}$ we define the equation of $X$ in $\mathcal{M}_i^{\epsilon}$ as follows. 
        \begin{equation}
            X = 
            \begin{cases}
                X^{\mathcal{M}_i} & \text{ if } \mathcal{C}^X = \mathbf{c_0}^X \\
                \sim \text{uniform in } D_X & \text{ otherwise, } 
            \end{cases}
        \end{equation}
        where $X^{\mathcal{M}_i}$ indicates the random variable $X$ in SEM $\mathcal{M}_i$.
        Note that $\mathcal{M}_i^{\epsilon}$ is compatible with $\mathcal{G}$ since $X$ is a function of $\mathcal{C}^X$ and parents of $X$ in $\mathcal{G}_{\mathbf{c}_0}$.
        Next, we show that $\mathcal{M}_i^{\epsilon}$ is also compatible with $\mathcal{L}_{\mathcal{C}}(\mathbf{c})$ for each $\mathbf{c} \in D_{\mathcal{C}}$, and therefore, compatible with $\mathcal{L}_{\mathcal{C}}$. 
        For each $X\in \mathbf{V \setminus}\mathcal{C}$, if $\mathbf{c}^{X} \neq \mathbf{c}_0^{X}$, then $X$ is not a function of any other variables in $\mathcal{M}_i^{\epsilon}$. 
        If $\mathbf{c}^{X} = \mathbf{c}_0^{X}$, then $X = X^{\mathcal{M}_i}$.
        In this case, for each edge $(Y, X)$ in $\mathcal{G}_{\mathbf{c}_0}$, $(Y, X) \notin \mathcal{L}_{\mathcal{C}}(\mathbf{c})$ since $\mathcal{L}_{\mathcal{C}}$ is maximal and $(Y, X) \notin \mathcal{L}_{\mathcal{C}}(\mathbf{c}_0)$.
        Hence, $\mathcal{M}_i^{\epsilon}$ is compatible with $(\mathcal{G},\mathcal{L}_{\mathcal{C}})$.
    
        From the definition of $\mathcal{M}_i^{\epsilon}$ and the fact that  $\mathcal{C}\subseteq \Lambda_{\mathcal{G}}$, we obtain
        \begin{equation} \label{eq: two models 1}
            P^{\mathcal{M}_i}(\mathbf{N \setminus }\mathcal{C}) 
            = P^{\mathcal{M}_i^{\epsilon}}(\mathbf{N \setminus }\mathcal{C} \mid \mathbf{c}_0),
        \end{equation}
        \begin{equation} \label{eq: two models 2}
            P^{\mathcal{M}_i}_{\mathbf{t}}(\mathbf{S}) 
            = P^{\mathcal{M}_i^{\epsilon}}_{\mathbf{t}}(\mathbf{S} \mid \mathbf{c}_0).
        \end{equation}
        We can write $P^{\mathcal{M}_i^{\epsilon}}(\mathbf{N})$ as 
        $$P^{\mathcal{M}_i^{\epsilon}}(\mathbf{N \setminus} \mathcal{C}, \mathbf{c}_0)\mathds{1}_{\{\mathcal{C}=\mathbf{c_0}\}} + P^{\mathcal{M}_i^{\epsilon}}(\mathbf{N}) \mathds{1}_{\{\mathcal{C}\mathbf{ \neq c_0}\}}.$$
        The second term is not larger than $P^{\mathcal{M}_i^{\epsilon}}(\mathcal{C}\mathbf{\neq c_0})$ which is equal to $\epsilon$.
        On the other hand, from \eqref{eq: two models 1}, we obtain
        \begin{equation} \label{eq: Nw0 to N}
        \begin{split}
            P^{\mathcal{M}_i^{\epsilon}}(\mathbf{N \setminus} \mathcal{C}, \mathbf{c_0}) 
            &= P^{\mathcal{M}_i^{\epsilon}}(\mathbf{c_0}) P^{\mathcal{M}_i^{\epsilon}}(\mathbf{N\setminus} \mathcal{C} \mid \mathbf{c_0})\\
            &= (1-\epsilon) P^{\mathcal{M}_i}(\mathbf{N \setminus }\mathcal{C}) .
        \end{split}
        \end{equation}
        Therefore, we have
        \begin{equation} \label{eq: bound 1}
            0 \leq 
            P^{\mathcal{M}_i^{\epsilon}}(\mathbf{N}) 
            - (1-\epsilon) P^{\mathcal{M}_i}(\mathbf{N \setminus }\mathcal{C})\mathds{1}_{\{\mathcal{C}=\mathbf{c_0}\}}
            \leq \epsilon.
        \end{equation}
        Since $P^{\mathcal{M}_1}(\mathbf{N \setminus }\mathcal{C})
        = P^{\mathcal{M}_2}(\mathbf{N \setminus }\mathcal{C})$, Equation \eqref{eq: bound 1} implies  
        \begin{equation} \label{eq: continuous 1}
            |P^{\mathcal{M}_1^{\epsilon}}(\mathbf{N})
            - P^{\mathcal{M}_2^{\epsilon}}(\mathbf{N})| 
            \leq \epsilon.
        \end{equation}
        Note that $P^{\mathcal{M}_i^{\epsilon}}_{\mathbf{t}}(\mathbf{c}) = P^{\mathcal{M}_i^{\epsilon}}(\mathbf{c}) $, since $\mathcal{C} \subseteq \Lambda_{\mathcal{G}}$. 
        Hence,
        \begin{equation} \label{eq: pts}
        \begin{split}
            P^{\mathcal{M}_i^{\epsilon}}_{\mathbf{t}}(\mathbf{S}) &
            = \sum_{\mathbf{c}} P^{\mathcal{M}_i^{\epsilon}}_{\mathbf{t}}( \mathbf{S} \mid \mathbf{c}) P^{\mathcal{M}_i^{\epsilon}}(\mathbf{c}) \\
            & \leq (1-\epsilon)  P^{\mathcal{M}_i^{\epsilon}}_{\mathbf{t}}( \mathbf{S} \mid \mathbf{c}_0) + \epsilon.
        \end{split}
        \end{equation}
        Equations \eqref{eq: two models 2} and \eqref{eq: pts} imply that
        \begin{equation} \label{eq: bound 2}
            0 \leq
            P^{\mathcal{M}_i^{\epsilon}}_{\mathbf{t}}(\mathbf{S}) 
            - (1-\epsilon) P^{\mathcal{M}_i}_{\mathbf{t}}(\mathbf{S}) 
            \leq \epsilon.
        \end{equation}
        Because $\delta = |P^{\mathcal{M}_1}_{\mathbf{t}}(\mathbf{s})- P^{\mathcal{M}_2}_{\mathbf{t}}(\mathbf{s})|$, Equation \eqref{eq: bound 2} implies  
        \begin{equation} \label{eq: continuous 2}
            |P^{\mathcal{M}_1^{\epsilon}}_{\mathbf{t}}(\mathbf{S})
            - P^{\mathcal{M}_2^{\epsilon}}_{\mathbf{t}}(\mathbf{S})|
            \geq (1-\epsilon)\delta -2\epsilon.
        \end{equation}
        Recall that $\mathcal{M}_i^{\epsilon}$ is compatible with $(\mathcal{G},\mathcal{L}_{\mathcal{C}})$ and $P^{\mathcal{M}_i^{\epsilon}}(\mathbf{n})>0$ for any $\mathbf{n}\in D_{\mathbf{N}}$. 
        Hence, there exists a continuous operator $\mathcal{F}$ such that $\mathcal{F}(P^{\mathcal{M}_i^{\epsilon}}(\mathbf{N})) = P^{\mathcal{M}_i^{\epsilon}}_{\mathbf{t}}(\mathbf{S})$.
        Note that $\mathcal{F}$ does not depend on $\epsilon$. Also, $\delta$ is a constant number.
        In this case, Equations \eqref{eq: continuous 1} and \eqref{eq: continuous 2} for sufficiently small $\epsilon$ contradict with the continuity of $\mathcal{F}$.
    \end{proof}
    
    As the proof of the sufficiency in Theorem \ref{thm: main} is constructive, it allows us to develop an algorithm (Algorithm \ref{algo}) for causal effect identification from $(\mathcal{G}, \mathcal{L}_{\mathcal{C}})$.
    
    \begin{algorithm}[h]
        \caption{Causal effect ID from $(\mathcal{G}, \mathcal{L}_{\mathcal{C}})$.}
        \label{algo}
        \begin{algorithmic}[1]
            \STATE \textbf{Input:} $\mathcal{G} = (\mathbf{V,E}),\, \mathcal{L}_{\mathcal{C}}$
            \STATE \textbf{Output:} A formula for $P_{\mathbf{t}}(\mathbf{S})$ based on $P(\mathbf{N})$, or return non-identifiable.
            
            \tcp{\textit{To make regular:}}
            \WHILE{$\exists (Y,X)\in \mathcal{L}_{\mathcal{C}}$ such that $\mathcal{C}^X=\varnothing$}
                \STATE Delete $(Y,X)$ from $\mathcal{G}$
            \ENDWHILE
            \tcp{\textit{To make maximal:}}
            \WHILE{$\exists \mathbf{c}_1, \mathbf{c}_2 \in D_{\mathcal{C}}, (Y, X) \in \mathcal{L}_{\mathcal{C}}(\mathbf{c}_1)$ such that\\ $\mathbf{c}_1^{\mathbf{X}}=\mathbf{c}_2^{\mathbf{X}}$ and $(Y, X) \notin \mathcal{L}_{\mathcal{C}}(\mathbf{c}_2)$}
                \STATE Add $(Y, X)$ to $\mathcal{L}_{\mathcal{C}}(\mathbf{c}_2)$
            \ENDWHILE
            \STATE $\mathbf{V}' \gets \mathbf{V} \setminus \mathcal{C}$
            \STATE $\mathbf{N}' \gets \mathbf{N} \setminus \mathcal{C}$
            \STATE $\mathbf{E}' \gets \mathbf{E} \cap (\mathbf{V}' \times \mathbf{V}')$
            \FOR{$\mathbf{c}\in D_{\mathcal{C}}$}
                \STATE $\mathcal{G}_{\mathbf{c}} \gets (\mathbf{V}', \mathbf{E}'\setminus \mathcal{L}_{\mathcal{C}}(\mathbf{c}))$
                \IF{$P_{\mathbf{t}}(\mathbf{S})$ is not ID from $\mathcal{G}_{\mathbf{c}}$}
                    \STATE \textbf{Return} Non-identifiable.
                \ELSE
                    \STATE $P^{\mathbf{c}}(\mathbf{N}') \gets P(\mathbf{N} \setminus \mathcal{C} \mid \mathbf{c})$
                    \STATE $\mathbf{F_c} \gets $ A formula for $P_{\mathbf{t}}(\mathbf{S})$ based on $P^{\mathbf{c}}(\mathbf{N}')$ using graph $\mathcal{G}_{\mathbf{c}}$
                \ENDIF
            \ENDFOR
            \STATE \textbf{Return} $\sum_{\mathbf{c}\in D_{\mathcal{C}}} \mathbf{F_c} P(\mathbf{c}). $
        \end{algorithmic}
    \end{algorithm}

    Algorithm \ref{algo} regularizes the labels in lines 3-4 and makes them maximal in lines 5-6 using Lemma \ref{lem: regularity} and Corollary \ref{cor: maximality}, respectively.
    Then, based on Theorem \ref{thm: main}, for each $\mathbf{c} \in D_{\mathcal{C}}$ it checks the identifiability of $P_{\mathbf{t}}(\mathbf{S})$ in DAG $\mathcal{G}_{\mathbf{c}} = (\mathbf{V}', \mathbf{E}'\setminus \mathcal{L}_{\mathcal{C}}(\mathbf{c}))$.
    If for any $\mathbf{c}$, this is non-identifiable, the algorithm terminates and outputs non-identifiable. 
    Otherwise, it defines the distribution $P^{\mathbf{c}}$ over $\mathbf{N}'$ in line 15 based on Equation \eqref{eq: new observation dist}. 
    Since $P_{\mathbf{t}}(\mathbf{S})$ is identifiable from $\mathcal{G}_{\mathbf{c}}$, the algorithm finds a formula for $P_{\mathbf{t}}(\mathbf{S})$ in terms of  $P^{\mathbf{c}}(\mathbf{N}')$ in line 16. 
    This step can be performed using any do-calculus based method.
    Finally, Algorithm \ref{algo} outputs a formula for $P_{\mathbf{t}}(\mathbf{S})$ in terms of $P(\mathbf{N})$ in line 17, using Equation \eqref{eq: if part}. 
    \begin{corollary}
        Algorithm \ref{algo} for the problem of causal effect identification from $(\mathcal{G}, \mathcal{L}_{\mathcal{C}})$ is sound and complete.
    \end{corollary}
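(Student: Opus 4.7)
The plan is to verify that every step of Algorithm \ref{algo} preserves the identifiability question, so that the final reduction to several single-DAG identifiability problems lets us inherit soundness and completeness from a known single-DAG procedure (e.g.\ Tian's ID algorithm), together with Theorem \ref{thm: main}.

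First I would justify the preprocessing of $(\mathcal{G}, \mathcal{L}_{\mathcal{C}})$. Lines 3--4 delete exactly those edges $(Y,X)\in \mathcal{L}_{\mathcal{C}}$ with $\mathcal{C}^X=\varnothing$; by Lemma \ref{lem: regularity}, each such deletion yields a DAG that is still a causal graph for the underlying joint distribution, so the class of SEMs compatible with $(\mathcal{G}, \mathcal{L}_{\mathcal{C}})$ and the observational distribution are unchanged, and the resulting label set is regular w.r.t.\ the new DAG. Lines 5--6 add labels allowed by Corollary \ref{cor: maximality}; these additions do not restrict the class of compatible SEMs (the added CSI's are already implied by the originally-given labels), so identifiability of $P_{\mathbf{t}}(\mathbf{S})$ from $(\mathcal{G}, \mathcal{L}_{\mathcal{C}})$ is unaffected. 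After both while-loops, $\mathcal{L}_{\mathcal{C}}$ is maximal-regular w.r.t.\ the current $\mathcal{G}$, so the hypothesis of Theorem \ref{thm: main} is satisfied.

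Next I would invoke Theorem \ref{thm: main} to conclude that identifiability from $(\mathcal{G}, \mathcal{L}_{\mathcal{C}})$ is equivalent to identifiability from each $\mathcal{G}_{\mathbf{c}}=(\mathbf{V}', \mathbf{E}'\setminus \mathcal{L}_{\mathcal{C}}(\mathbf{c}))$ for $\mathbf{c}\in D_{\mathcal{C}}$. The loop in lines 11--19 checks precisely this condition by calling a sound and complete single-DAG identification subroutine (e.g.\ the algorithm of \cite{tian2003ID,huang2008completeness} or do-calculus). Soundness of the non-identifiability verdict: if any $\mathcal{G}_{\mathbf{c}}$ yields non-identifiability, Theorem \ref{thm: main} (necessity) guarantees non-identifiability from $(\mathcal{G}, \mathcal{L}_{\mathcal{C}})$, so returning ``non-identifiable'' in line 14 is correct. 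Completeness: if the effect is truly non-identifiable from $(\mathcal{G}, \mathcal{L}_{\mathcal{C}})$, then by Theorem \ref{thm: main} some $\mathcal{G}_{\mathbf{c}}$ must fail identifiability, and the completeness of the single-DAG subroutine ensures the algorithm detects this and returns non-identifiable.

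Finally, for soundness of the returned formula I would use the constructive half of Theorem \ref{thm: main}. When every $\mathcal{G}_{\mathbf{c}}$ is identifiable, line 16 produces $\mathbf{F}_{\mathbf{c}}$ expressing $P_{\mathbf{t}}(\mathbf{S}\mid \mathbf{c})$ as a functional of $P^{\mathbf{c}}(\mathbf{N}')=P(\mathbf{N}\setminus\mathcal{C}\mid \mathbf{c})$ via Equation \eqref{eq: new observation dist}; combining these with $P(\mathbf{c})$ as in Equation \eqref{eq: if part} yields the correct formula for $P_{\mathbf{t}}(\mathbf{S})$ in terms of $P(\mathbf{N})$, which is exactly what line 17 returns. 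The main subtlety I would be careful about is ensuring that the regularization step (edge deletion) and the maximality step (label addition) are performed in the right order and that the maximality condition is evaluated relative to the \emph{post-regularization} graph, since $\mathcal{C}^X$ depends on the current edge set; this is the one spot where a sloppy proof could accidentally appeal to Corollary \ref{cor: maximality} in a setting where it no longer applies.
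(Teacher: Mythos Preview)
Your proposal is correct and follows essentially the same approach as the paper: the paper does not give a separate formal proof of the corollary but simply notes (in the paragraph following Algorithm~\ref{algo}) that lines 3--4 and 5--6 achieve regularity and maximality via Lemma~\ref{lem: regularity} and Corollary~\ref{cor: maximality}, after which Theorem~\ref{thm: main} reduces the problem to per-context single-DAG identification, with Equations~\eqref{eq: new observation dist} and~\eqref{eq: if part} yielding the returned formula. Your write-up is in fact more explicit about separating the soundness and completeness directions and about the ordering subtlety between regularization and maximization, but the underlying argument is the same.
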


    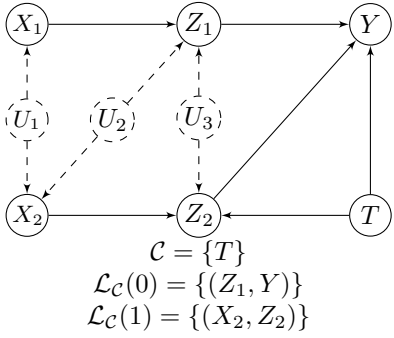
\begin{figure}[t]
        \centering
        \tikzstyle{block} = [draw, fill=white, circle, text width=1.1em, text centered, align=center]
    	\tikzstyle{input} = [coordinate]
    	\tikzstyle{output} = [coordinate]
        \begin{subfigure}[b]{0.3\textwidth}
            \centering
        	\begin{tikzpicture}[->, node distance=1cm,>=latex', every node/.style={inner sep=1pt}]
        	    \node[block](X1){\small $X_1$};
    	        \node[block,dashed](U1)[below =0.7cm of X1]{\small $U_1$};
    	        \node[block](X2)[below =0.7cm of U1]{\small $X_2$};
    	        \node[block](Z1)[right= 1.7 cm of X1]{$Z_1$};
    	        \node[block,dashed](U2)[right= 0.55 cm of U1]{$U_2$};
    	        \node[block](Z2)[right= 1.7 cm of X2]{$Z_2$};
                \node[block,dashed](U3)[above= 0.7 cm of Z2]{$U_3$};
    	        \node[block](Y)[right= 1.7 cm of Z1]{$Y$};
    	        \node[block](T)[right= 1.7 cm of Z2]{$T$};
                \draw (X1) to (Z1);
                \draw (X2) to (Z2);
                \draw (Z1) to (Y);
                \draw (Z2) to (Y);
                \draw[dashed] (U1) to (X1);
                \draw[dashed] (U1) to (X2);
                \draw[dashed] (U2) to (X2);
                \draw[dashed] (U2) to (Z1);
                \draw[dashed] (U3) to (Z1);
                \draw[dashed] (U3) to (Z2);
                \draw (T) to (Y);
                \draw (T) to (Z2);
    		\end{tikzpicture}
        \end{subfigure}
    	\begin{subfigure}[b]{0.2\textwidth}
    	    \centering
    	    $\mathcal{C} = \{T\}$
    		$\mathcal{L}_{\mathcal{C}}(0)= \{(Z_1,Y)\}$ $\mathcal{L}_{\mathcal{C}}(1)= \{(X_2,Z_2)\}$
    		\hspace{1cm}
    	\end{subfigure}
    	\caption{Causal graph and $\mathcal{L}_{\mathcal{C}}$ in Example \ref{ex:exampl5}.}
    	\label{fig: example 3}
    \end{figure}

\begin{example}\label{ex:exampl5}
    Consider the causal graph with control variable $T$, $D_T=\{0,1\}$, and the label set $\mathcal{L}_\mathcal{C}$ depicted in Figure \ref{ex:example3}. 
    Note that $\mathcal{L}_\mathcal{C}$ is maximal-regular. 
    In this case, the causal effect of $\{X_1,X_2\}$ on $Y$, i.e., $P_{x_1,x_2}(Y)$ is non-identifiable from the graph alone. 
    However, it is identifiable when we have access to both the graph and the label set $\mathcal{L}_\mathcal{C}$.
    In this case, Algorithm \ref{algo} finds the following formula.
    \begin{align*}
        &P_{x_1,x_2}(Y)=\mathbf{F}_0P(T=0)+\mathbf{F}_1P(T=1),
    \end{align*}
    where
    \[
        \mathbf{F}_0 = \sum_{Z_2}P(Y \mid Z_2,T=0)P(Z_2 \mid X_2, T=0), 
    \]
    \[
        \mathbf{F}_1=\!\!\sum_{Z_1,Z_2} P(Y \mid Z_1,Z_2,T=1)P(Z_1, Z_2 \mid X_1, T=1).
    \]
\end{example}
    
\section{LEARNING $\mathcal{L}_{\mathcal{C}}$ FROM $P(\mathbf{N})$}
    In the previous section, we proposed an algorithm for causal effect identification, when a set of labels $\mathcal{L}_{\mathcal{C}}$ are given as side information. 
    In this section, we propose Algorithm \ref{algo 2} for inferring these labels from observational distribution.
    \begin{algorithm}[h]
        \caption{Learning $\mathcal{L}_{\mathcal{C}}$ from $P(\mathbf{N})$.}
        \label{algo 2}
        \begin{algorithmic}[1]
            \STATE \textbf{Input:} $\mathcal{G}=(\mathbf{V}, \mathbf{E})$, $P(\mathbf{N})$
            \STATE \textbf{Output:} $\mathcal{L}_{\mathcal{C}}$
            \FOR{$\mathbf{c}\in D_{\mathcal{C}}$}
                \FOR{$X,Y \!\in \mathbf{N}$ such that $(Y,X)\in \mathbf{E}$}
                        \IF{$X \independent Y \vert \mathbf{c}, Anc_{\{X,Y\}}\cap \mathbf{N}\setminus \mathcal{C}$}
                            \STATE Add $(Y,X)$ to $\mathcal{L}_{\mathcal{C}}(\mathbf{c})$
                        \ENDIF
                \ENDFOR
            \ENDFOR
            \STATE \textbf{Return} $\mathcal{L}_{\mathcal{C}}$
        \end{algorithmic}
    \end{algorithm}

    In order to add an edge $(Y,X)\in \mathbf{E}$ to $\mathcal{L}_{\mathcal{C}}(\mathbf{c})$, Algorithm \ref{algo 2} evaluates the following CSI in line 5.
    \begin{equation} \label{eq: CSI for learning Lc}
        X \independent Y \vert \mathbf{c}, Anc_{\{X,Y\}}\cap \mathbf{N}\setminus \mathcal{C}.
    \end{equation}
    Note that the conditioning set in Equation \eqref{eq: CSI for learning Lc} is a subset of observed variables.
    Hence, we can evaluate this CSI from $P(\mathbf{N})$ when $X$ and $Y$ are observed. 

    If Equation \eqref{eq: CSI for learning Lc} holds for $(Y,X) \in \mathbf{E}$, by Definition \ref{def:label}, $(Y,X) \in \mathcal{L}_\mathcal{C}(\mathbf{c})$, which proves the soundness of the algorithm. 
    Although the reverse does not hold in general, we present a graphical constraint in the following theorem, under which the reverse also holds true, and consequently, the algorithm is complete. 
    
    \begin{theorem} \label{thm: learning Lc}
        Suppose $(Y,X)\in \mathbf{E}$ such that $X$ is not a function of $Y$ when $\mathcal{C}=\mathbf{c}$. 
        Let $\mathcal{H}$ be the DAG obtained by removing $(Y,X)$ from $\mathcal{G}$.
        If I) $X,Y \in \mathbf{N}$ and II) $X$ and $Y$ are d-separable in $\mathcal{H}$ given subsets of $\mathbf{N}$, then Algorithm \ref{algo 2} correctly add $(Y,X)$ to $\mathcal{L}_{\mathcal{C}}(\mathbf{c})$.
    \end{theorem}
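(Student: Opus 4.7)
The plan is to translate the independence check in line~5 of Algorithm~\ref{algo 2} into a d-separation statement in the mutilated DAG $\mathcal{H}$ and then invoke an ancestor criterion for d-separation. As in the proof of Theorem~\ref{thm: main}, fixing $\mathcal{C}=\mathbf{c}$ in $\mathcal{M}$ produces an SEM $\mathcal{M}_{\mathbf{c}}$ over $\mathbf{V}\setminus\mathcal{C}$ whose causal graph is $\mathcal{H}$ restricted to $\mathbf{V}\setminus\mathcal{C}$ (using $\mathcal{C}\subseteq\Lambda_{\mathcal{G}}$ and the hypothesis that $X$ is not a function of $Y$ under $\mathcal{C}=\mathbf{c}$), and $P^{\mathcal{M}_{\mathbf{c}}}(\mathbf{N}\setminus\mathcal{C})=P(\mathbf{N}\setminus\mathcal{C}\mid\mathbf{c})$. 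Writing $\mathbf{Z}:=Anc_{\{X,Y\}}\cap\mathbf{N}\setminus\mathcal{C}$, the CSI in line~5 is, via the global Markov property for $\mathcal{M}_{\mathbf{c}}$, equivalent to d-separation of $X$ and $Y$ in $\mathcal{H}$ given $\mathcal{C}\cup\mathbf{Z}$. A short check shows $Anc^{\mathcal{G}}_{\{X,Y\}}=Anc^{\mathcal{H}}_{\{X,Y\}}$: any $\mathcal{G}$-ancestor of $X$ whose only directed path to $X$ uses the removed edge $Y\to X$ is already an $\mathcal{H}$-ancestor of $Y$.

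The main technical step is to prove the following \emph{ancestor criterion}: if $X$ and $Y$ are d-separable in $\mathcal{H}$ by some subset of $\mathbf{N}$, then $Anc^{\mathcal{H}}_{\{X,Y\}}\cap\mathbf{N}\setminus\{X,Y\}$ also d-separates them. I would argue by contradiction, assuming a path $P$ from $X$ to $Y$ is active given this set. Every collider $V$ on $P$ is activated, so $V$ or some descendant of $V$ lies in the conditioning set; since in a DAG the descendants of a non-ancestor of $\{X,Y\}$ are themselves non-ancestors, this forces $V\in Anc_{\{X,Y\}}$. Every non-collider $U$ on $P$ has at least one outgoing edge along $P$; following that edge yields a directed trail terminating at an endpoint of $P$ or at a collider, so $U\in Anc_{\{X,Y\}}$ as well, and since $U$ lies outside the conditioning set it must be unobserved. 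Thus $P$ has only unobserved non-colliders while each of its colliders has an observed descendant in $Anc_{\{X,Y\}}$. Given any candidate separator $\mathbf{S}\subseteq\mathbf{N}\setminus\{X,Y\}$: if $\mathbf{S}$ activates every collider of $P$ then $P$ itself is active under $\mathbf{S}$; otherwise at an unactivated collider $V$ one reroutes $P$ by splicing in the directed descendant path from $V$ to an observed witness $V^{*}\in Anc_{\{X,Y\}}$ and then from $V^{*}$ to $X$ (or $Y$), all inserted vertices being descendants of $V$ and hence outside $\mathbf{S}$. Iterating the reroute over the finitely many unactivated colliders produces a path active under $\mathbf{S}$, contradicting d-separability.

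Applying the ancestor criterion to $\mathcal{H}$, $X$ and $Y$ are d-separated there by $Anc^{\mathcal{H}}_{\{X,Y\}}\cap\mathbf{N}\setminus\{X,Y\}$, and augmenting this set with $\mathcal{C}$ cannot create a new active path: each $C\in\mathcal{C}$ is a root, hence a non-collider on every path through it and never a descendant of a collider, so conditioning on $C$ only ever closes paths. The global Markov property for $\mathcal{M}_{\mathbf{c}}$ then yields the CSI of line~5 in $P$, and Algorithm~\ref{algo 2} correctly adds $(Y,X)$ to $\mathcal{L}_{\mathcal{C}}(\mathbf{c})$. The hard part is the rerouting construction in the ancestor criterion: one must verify the concatenated path remains simple across successive reroutes, which I would handle by induction on the number of unactivated colliders together with a choice of shortest witnessing directed paths; a potentially cleaner alternative is to invoke the moral-graph characterization of d-separation on the induced ancestral subgraph $\mathcal{H}[Anc_{\{X,Y\}}]$ and verify separation directly there.
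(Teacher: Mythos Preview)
Your approach is correct and lands on the same conclusion, but the route is genuinely different from the paper's. The paper's proof is three lines: it invokes the classical equivalence (due to \cite{verma1991equivalence}) between ``$X$ and $Y$ are d-separable by some observed set'' and ``there is no inducing path between $X$ and $Y$,'' and then applies Lemma~\ref{lem: IP iff}, which says that the absence of an inducing path is equivalent to $\dsep{X}{Y}{Anc_{\{X,Y\}}\cap\mathbf{N}}{\mathcal{H}}$. From there it simply declares that the CSI in line~5 holds. In other words, the paper outsources the entire ``ancestor criterion'' to a citation.

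What you do instead is re-prove that criterion from scratch: you show directly that if an active path $P$ survives conditioning on $Anc_{\{X,Y\}}\cap\mathbf{N}$, then $P$ is an inducing path (unobserved non-colliders, colliders in $Anc_{\{X,Y\}}$), and then argue by rerouting that such a $P$ defeats every observed separator. This is exactly the content of the Verma--Pearl lemma, and your sketch of the reroute is essentially the standard proof; the difficulty you flag (keeping the spliced path simple under iteration) is real but well known to be manageable, and your moral-graph alternative on $\mathcal{H}[Anc_{\{X,Y\}}]$ would indeed be cleaner. You also make explicit two steps the paper leaves implicit: the passage from d-separation in $\mathcal{H}$ to the CSI via the Markov property for $\mathcal{M}_{\mathbf{c}}$, and the harmlessness of augmenting the separator by the roots $\mathcal{C}$. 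Both are needed and the paper's proof silently assumes them.

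One small slip: you write that the CSI is ``equivalent'' to the d-separation via the global Markov property. Markov gives only the implication from d-separation to conditional independence; the converse would require faithfulness. Since the theorem only needs the forward direction, this does not damage your argument, but the word should be ``implied by.'' Net comparison: the paper's proof is shorter because it cites a known lemma; yours is self-contained and actually fills in the d-separation-to-CSI bridge that the paper elides.
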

    To prove this theorem, we use the following definition and Lemma by \cite{verma1991equivalence}.
    \begin{definition}[Inducing path]
        A path $\mathcal{P}$ between two observed variables $X$ and $Y$ is called an inducing path if I) every non-endpoint observed vertex in $\mathcal{P}$ is a collider\footnote{A vertex $V_i$ is called a collider on a path $(V_1, \dots V_k)$ if $1<i<k$ and $V_{i-1} \to V_i \gets V_{i+1}$.}, and II) every collider in $\mathcal{P}$ belongs to $Anc_{\{X,Y\}}$. 
    \end{definition}
    \begin{lemma}[\cite{verma1991equivalence}] \label{lem: IP iff}
        There is no inducing path between $X$ and $Y$ in $\mathcal{H}$ if and only if $\dsep{X}{Y}{(Anc_{\{X,Y\}})\cap \mathbf{N}}{\mathcal{H}}$.
    \end{lemma}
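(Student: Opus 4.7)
The goal is to show that under conditions I and II, the specific CSI test on line 5 of Algorithm \ref{algo 2} succeeds, i.e.,
\[
X \independent Y \mid \mathbf{c},\; Anc_{\{X,Y\}}\cap \mathbf{N}\setminus \mathcal{C}.
\]
The strategy is to pass to the interventional SEM over non-control variables, where $\mathcal{H}$ (with control nodes removed) is an honest causal graph; use condition II to rule out inducing paths in that graph; and then invoke Lemma \ref{lem: IP iff} to obtain d-separation by the ancestral observed set, which translates back to the desired CSI.

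\emph{Step 1 (Setup).} Fix an SEM $\mathcal{M}$ with causal graph $\mathcal{G}$ compatible with $\mathcal{L}_{\mathcal{C}}$, and let $\mathcal{M}_{\mathbf{c}}$ be the SEM over $\mathbf{V}\setminus\mathcal{C}$ obtained by setting $\mathcal{C}=\mathbf{c}$. Since $\mathcal{C}\subseteq\Lambda_{\mathcal{G}}$ (roots), intervention coincides with conditioning on $\mathcal{C}=\mathbf{c}$ and $P^{\mathcal{M}_\mathbf{c}}(\mathbf{V}\setminus\mathcal{C}) = P^{\mathcal{M}}(\mathbf{V}\setminus\mathcal{C}\mid \mathbf{c})$. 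Because $X$ is not a function of $Y$ under this intervention, the graph $\tilde{\mathcal{H}}$ obtained from $\mathcal{H}$ by deleting the control nodes is a causal graph for $\mathcal{M}_{\mathbf{c}}$. A short calculation, using that each $C\in\mathcal{C}$ is a root and hence a descendant of nothing, shows that $Anc^{\tilde{\mathcal{H}}}_{\{X,Y\}} = Anc^{\mathcal{G}}_{\{X,Y\}}\setminus \mathcal{C}$ (removing the single edge $(Y,X)$ does not shrink the ancestor set of $\{X,Y\}$ because $Y\in Anc_Y$).

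\emph{Step 2 (Transfer of the separator from $\mathcal{H}$ to $\tilde{\mathcal{H}}$).} Let $\mathbf{S}\subseteq \mathbf{N}$ be the observed separator of $X,Y$ in $\mathcal{H}$ guaranteed by condition II. I claim $\mathbf{S}':=\mathbf{S}\setminus \mathcal{C}$ still d-separates $X$ and $Y$ in $\tilde{\mathcal{H}}$. Every path $\pi$ in $\tilde{\mathcal{H}}$ between $X$ and $Y$ is also a path in $\mathcal{H}$ that avoids $\mathcal{C}$. A non-collider on $\pi$ lying in $\mathbf{S}$ is non-control (it sits on $\pi$), so it also lies in $\mathbf{S}'$; and for a collider $V$ on $\pi$, every descendant of $V$ lies outside $\mathcal{C}$ (because control nodes are roots), so the ``$V$ and its descendants not in $\mathbf{S}$'' blocking clause is preserved verbatim when $\mathbf{S}$ is replaced by $\mathbf{S}'$. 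Hence $X$ and $Y$ are d-separable in $\tilde{\mathcal{H}}$ by an observed subset of $\mathbf{N}\setminus \mathcal{C}$.

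\emph{Step 3 (Applying Lemma \ref{lem: IP iff} and converting to a CSI).} Any inducing path in $\tilde{\mathcal{H}}$ between $X$ and $Y$ cannot, by definition, be blocked by any observed conditioning set; the previous step therefore rules out such a path. Lemma \ref{lem: IP iff}, applied to $\tilde{\mathcal{H}}$ with observed set $\mathbf{N}\setminus\mathcal{C}$, then yields
\[
\dsep{X}{Y}{Anc^{\tilde{\mathcal{H}}}_{\{X,Y\}}\cap (\mathbf{N}\setminus\mathcal{C})}{\tilde{\mathcal{H}}}.
\]
By the identity $Anc^{\tilde{\mathcal{H}}}_{\{X,Y\}}\cap(\mathbf{N}\setminus\mathcal{C}) = Anc^{\mathcal{G}}_{\{X,Y\}}\cap\mathbf{N}\setminus\mathcal{C}$ from Step 1, and the fact that $\tilde{\mathcal{H}}$ is a causal graph for $\mathcal{M}_{\mathbf{c}}$, this d-separation yields a CI under $P^{\mathcal{M}_\mathbf{c}}$, which by the equality $P^{\mathcal{M}_\mathbf{c}}(\mathbf{V}\setminus\mathcal{C}) = P^{\mathcal{M}}(\mathbf{V}\setminus\mathcal{C}\mid \mathbf{c})$ is exactly the CSI that Algorithm \ref{algo 2} checks on line 5. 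Consequently, the algorithm adds $(Y,X)$ to $\mathcal{L}_{\mathcal{C}}(\mathbf{c})$.

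The main obstacle is Step 2: verifying that the blocking certificate in $\mathcal{H}$ survives the removal of $\mathcal{C}$ from both the graph and the separator. This hinges crucially on $\mathcal{C}\subseteq \Lambda_{\mathcal{G}}$, which prevents any control node from being a descendant of an interior vertex along a path between $X$ and $Y$. Once that is in hand, Lemma \ref{lem: IP iff} supplies the canonical ancestral separator and the translation to a CSI is routine.
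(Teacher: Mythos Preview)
Your proposal does not prove the stated lemma. Lemma~\ref{lem: IP iff} is the Verma--Pearl characterization (no inducing path between $X$ and $Y$ iff they are d-separated by their observed ancestors), and the paper does not prove it at all---it is quoted from \cite{verma1991equivalence}. What you have written is a proof of Theorem~\ref{thm: learning Lc}: your stated goal is to establish the CSI on line~5 of Algorithm~\ref{algo 2} under conditions~I and~II, and in Step~3 you explicitly \emph{invoke} Lemma~\ref{lem: IP iff} as a black box. As a proof of Lemma~\ref{lem: IP iff} itself, the argument is therefore circular.

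If your intended target was Theorem~\ref{thm: learning Lc}, then your approach and the paper's are essentially the same: both go from condition~II to the absence of an inducing path, apply Lemma~\ref{lem: IP iff} to produce the canonical ancestral observed separator, and then read off the CSI checked in line~5. Your version is more explicit in one respect: you first transfer the separator from $\mathcal{H}$ to the control-free graph $\tilde{\mathcal{H}}$ (Step~2), apply the lemma there, and then translate the resulting d-separation into a CSI via $P^{\mathcal{M}_{\mathbf{c}}}(\cdot)=P^{\mathcal{M}}(\cdot\mid\mathbf{c})$. The paper's proof applies Lemma~\ref{lem: IP iff} directly in $\mathcal{H}$ and jumps to Equation~\eqref{eq: CSI for learning Lc} without spelling out how the control variables are removed from the conditioning set; your Step~2 fills that gap. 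But the route is the same, not a genuinely different argument.
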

    Now, we prove Theorem \ref{thm: learning Lc}. 
    \begin{proof}
        $X$ and $Y$ are d-separable in $\mathcal{H}$ by subsets of $\mathbf{N}$. 
        This is equivalent to saying there is no inducing path between $X$ and $Y$ in $\mathcal{H}$ (\cite{verma1991equivalence}). 
        Also, Lemma \ref{lem: IP iff} presents a necessary and sufficient graphical condition for this constraint. Hence, Equation \eqref{eq: CSI for learning Lc} holds and Algorithm \ref{algo 2} correctly adds $(Y,X)$ to $\mathcal{L}_{\mathcal{C}}(\mathbf{c})$ in line 6. 
    \end{proof}

The following example demonstrates a scenario where Algorithm \ref{algo 2} can infer a label set from the observational distribution such that a non-identifiable causal effect (using only the information from the causal graph) becomes identifiable (using both the graph and the inferred label set). 

\begin{example}
    Consider again the example in Figure \ref{fig: example 3}. 
    Since I) $Z_1,Y\in\mathbf{N}$ and II) $Z_2$ d-separates $Z_1$ and $Y$ in the resulting graph after removing $(Z_1,Y)$, Theorem \ref{thm: learning Lc} implies that Algorithm \ref{algo 2} can infer $\mathcal{L}_\mathcal{C}(0)$. 
    Similarly, I) $X_2,Z_2\in\mathbf{N}$ and II) $X_2$ and $Z_2$ are d-separated by empty set in the resulting graph after removing $(X_2,Z_2)$.
    Hence, Algorithm \ref{algo 2} can infer $\mathcal{L}_\mathcal{C}(1)$, and therefore, $\mathcal{L}_\mathcal{C}$.
    As we showed in Example \ref{ex:exampl5}, Algorithm \ref{algo} can then identify the causal effect of $\{X_1,X_2\}$ on $Y$ while this causal effect is non-identifiable from the graph alone. 
\end{example}

\section{COMPLEXITY}
    Algorithm \ref{algo} determines the identifiability of $P_{\mathbf{t}}(\mathbf{s})$ from $(\mathcal{G},\mathcal{L}_\mathcal{C})$ by checking the identifiability of $P_{\mathbf{t}}(\mathbf{s})$ in $\mathcal{G}_{\mathbf{c}} = (\mathbf{V}', \mathbf{E}'\setminus \mathcal{L}_{\mathcal{C}}(\mathbf{c}))$ for each $\mathbf{c} \in D_{\mathcal{C}}$. 
    On the other hand, \cite{tian2003ID} proposed a sound and complete algorithm to check the identifiability of a causal effect from a causal graph. 
    The complexity of this algorithm is polynomial in the number of vertices of the graph. 
    Therefore, Algorithm \ref{algo} determines the identifiability of $P_{\mathbf{t}}(\mathbf{s})$ from $(\mathcal{G},\mathcal{L}_\mathcal{C})$ by calling \cite{tian2003ID}'s algorithm at most $|D_{\mathcal{C}}|$ number of times.
    Consequently, when the number of control variables does not grow by the number of vertices in the causal graph (e.g., $|\mathcal{C}|=r$ for some fixed $r$), complexity of Algorithm \ref{algo} will also be polynomial in the number of vertices of the graph. 

    In the next example, we compare our algorithm with the method in \cite{tikka2020identifying}.
    \begin{figure}[t]
        \centering
        \tikzstyle{block} = [draw, fill=white, circle, text width=1.1em, text centered, align=center]
    	\tikzstyle{input} = [coordinate]
    	\tikzstyle{output} = [coordinate]
        \begin{subfigure}[b]{0.3\textwidth}
            \centering
        	\begin{tikzpicture}[->, node distance=1cm,>=latex', every node/.style={inner sep=1pt}]
    	        \node[block,dashed](U){\small $U$};
    	        \node[block](X)[below left = 1.5cm of U]{\small $X$};
    	        \node[block](Y)[below right = 1.5cm of U]{\small $Y$};
    	        \node[block](C)[below right=1.5cm of X]{\small $C$};
                \draw (X) to (Y);
                \draw[dashed] (U) to (X);
                \draw[dashed] (U) to (Y);
                \draw (C) to (Y);
                \draw (C) to (X);
    		\end{tikzpicture}
        \end{subfigure}
    	\begin{subfigure}[b]{0.2\textwidth}
    	    \centering
    	    $\mathcal{C} = \{C\}$
    		$\mathcal{L}_{\mathcal{C}}(0)= \{(U,X)\}$ $\mathcal{L}_{\mathcal{C}}(1)= \{(X,Y)\}$
    		\hspace{1cm}
    	\end{subfigure}
    	\caption{Causal graph and $\mathcal{L}_{\mathcal{C}}$ in Example \ref{ex:exampl6}.}
    	\label{fig: example 5}
    \end{figure}
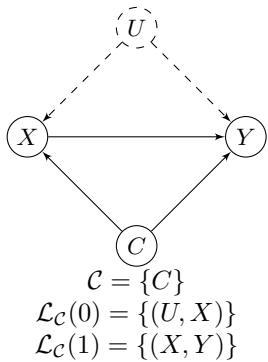

    \begin{example} \label{ex:exampl6}
        Consider the causal graph with control variable $C$, $D_C=\{0,1\}$, and the label set $\mathcal{L}_\mathcal{C}$ depicted in Figure \ref{ex:example3}. 
        Herein, the goal is to identify the causal effect of $X$ on $Y$.
        In this case, Algorithm \ref{algo} finds the following formula. 
        \begin{equation} \label{eq: ex7}
            P_{x}(Y)=\mathbf{F}_0 P(C=0) + \mathbf{F}_1 P(C=1),
        \end{equation}
        where
        \[
            \mathbf{F}_0 = P(Y \mid X,C=0), 
        \]
        \[
            \mathbf{F}_1 = P(Y \mid C=1).
        \]
        Algorithm \ref{algo} finds $\mathbf{F}_0$ and $\mathbf{F}_1$ by solving only two identifiablity problems from two graphs with vertices $\{X,Y\}$.
        On the other hand, \cite{tikka2020identifying} provide eight rules ($R1-R8$) that can be applied recursively to compute $P_{x}(Y)$ from $P(X,Y,C)$. 
        In this example, there are many ways that these rules can be applied. 
        For example, $(R2 \to R1 \to R1\to R7\to R4\to R6\to R2)$ or $(R2 \to R7 \to R3\to R1\to R1\to R4\to R6 \to R2)$ (See Figure $4$ in \cite{tikka2020identifying} for more details.)
        Applying the eight rules with these specific orders will result in the same formula as in Equation \eqref{eq: ex7}. 
        It is worthy to note that finding a correct order for applying these rules is challenging and can be computationally expensive. 
        Therefore, our algorithm finds the solution faster than the method in \cite{tikka2020identifying}.
    \end{example}

\section{EXPERIMENTS}
    \begin{figure*}[t] 
        \centering
        \captionsetup{justification=centering}
        \begin{subfigure}[b]{0.45\textwidth}
            \centering
            \includegraphics[width=0.8\textwidth]{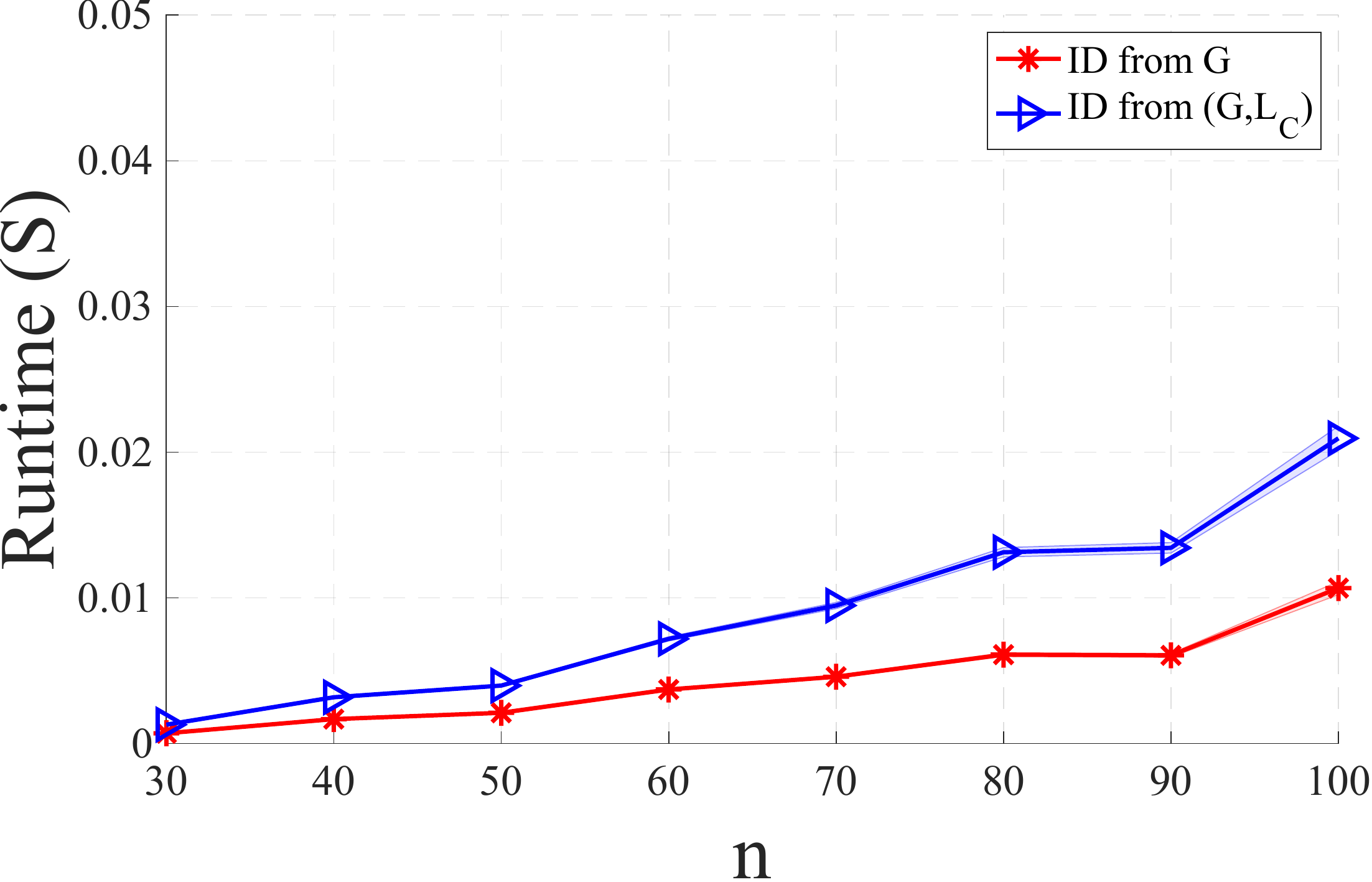}
            \caption{}
            \label{fig: exp 1}
        \end{subfigure}
        \begin{subfigure}[b]{0.45\textwidth}
            \centering
            \includegraphics[width=0.8\textwidth]{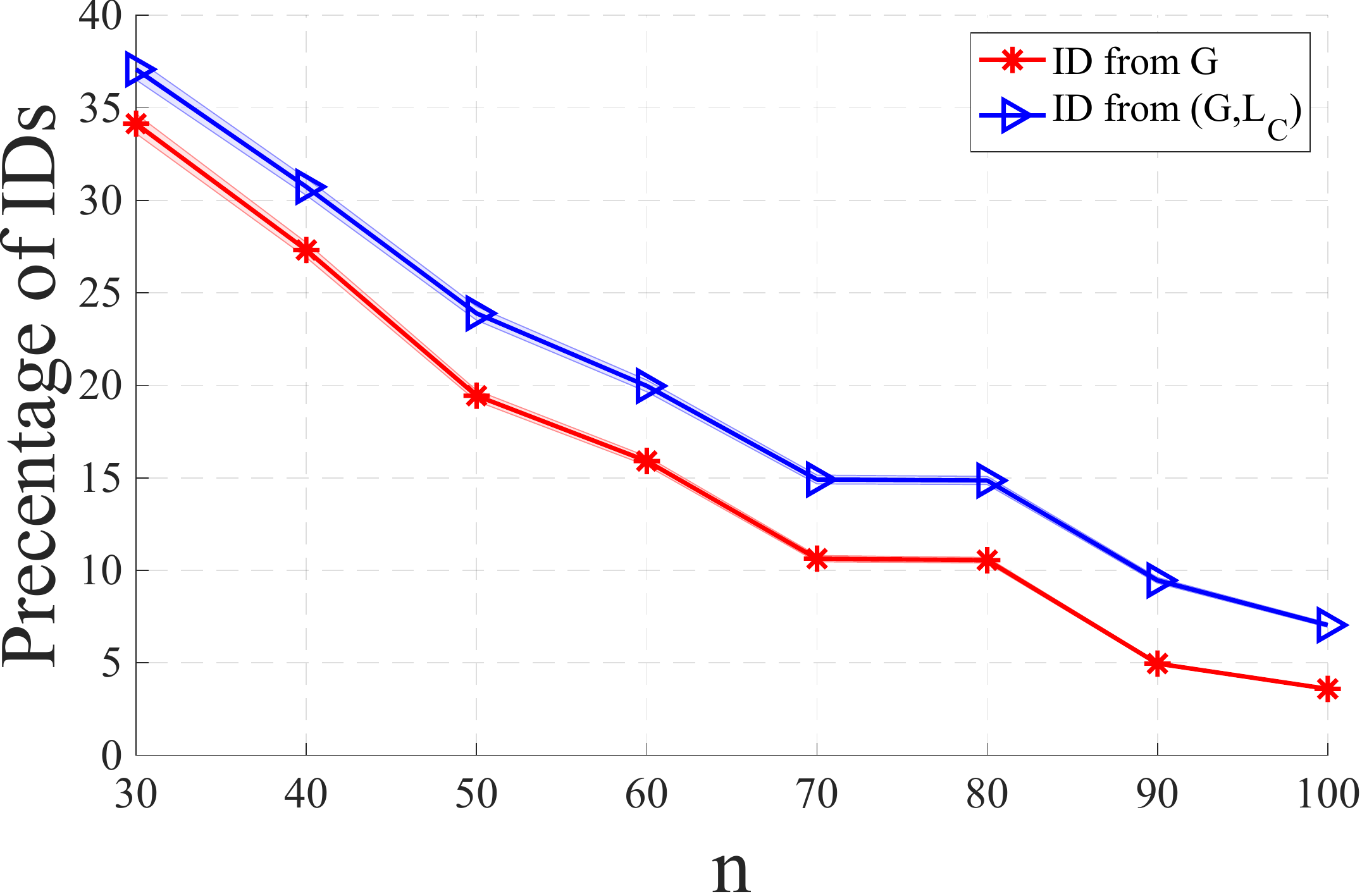}
            \caption{}
            \label{fig: exp 2}
        \end{subfigure}
        \caption{Performance of Algorithm \ref{algo} and the identification algorithm by \cite{tian2003ID} over random graphs generated from Erdos-Renyi models. }
        \label{fig: experiments}
    \end{figure*}

    The MATLAB implementation of Algorithm \ref{algo} along with our implementation of the algorithm by \cite{tian2003ID} are publicly available\footnote{https://github.com/Ehsan-Mokhtarian/causalID}.
    In this section, we illustrate the performance of these methods over a set of random graphs in Figure \ref{fig: experiments}.
    Next, we describe the settings of this experiment. 
    
    The variables are assumed to be binary. 
    The skeleton of the graphs is generated from Erdos-Renyi model $G(n,p)$ (\cite{erdHos1960evolution}), where $n$ is the number of variables and $p$ is the probability of an edge. 
    $n$ is chosen in a wide range from 30 to 100, and $p=\frac{\log(n)}{n}$. 
    After constructing the skeleton, the edges are oriented based on a random ordering over the vertices. 
    Each variable belongs to the set of observed variables with probability $0.7$. 
    Also, each variable in $\Lambda_{\mathcal{G}}$ is randomly selected as a control variable with probability $0.8$.
    To make the model simpler, we replace the control variables $\mathcal{C}$ with one variable $C$, such that the children of $C$ are the union of children of the variables in $\mathcal{C}$. 
    To build the labels, for each edge $(Y,X) \in \mathbf{E}$ such that $C \in Pa_X$, either we add $(Y,X)$ to no $\mathcal{L}_C(c)$ with probability $0.2$, or otherwise, we add it to one of the labels $\mathcal{L}_C(c)$ uniformly at random.
    Each point on the plots is reported as the average of 1000 runs, and the shaded areas indicate the $80\%$ confidence intervals.
    We randomly split $\mathbf{N}$ into two subsets $\mathbf{S}$ and $\mathbf{T}$. 
    Then, we run both algorithms to identify the causal effect $P_{\mathbf{t}}(\mathbf{s})$.
    
    Figure \ref{fig: exp 1} shows the runtime of the algorithms in seconds. As this figure suggests, both of these algorithms are practically fast and are scalable to large graphs. 
    Figure \ref{fig: exp 2} shows the percentage of the runs in which the corresponding algorithm manages to identify the causal effect. 
    This figure shows that given the label set $\mathcal{L}_{\mathcal{C}}$, more causal effects will be identifiable than the case where only the causal graph is given. 
    
    We did not report the results of the algorithm by \cite{tikka2020identifying} in this figure since their method is not scalable to large graphs. 
    For instance, even for the graphs of size 10, their algorithm sometimes requires 30 minutes of runtime to terminate, while our method requires around 0.02 seconds to terminate for the graphs with 100 variables.

\section{CONCLUSION}
    We studied the causal effect identification problem when extra side information about the underlying generative causal model in the form of CSI relations is available. 
    To this end, we showed that when CSI relations of control variables are given, the identifiability of an interventional distribution from observational distribution is equivalent to a series of causal effect identifications only from causal graphs.  
    Since there exist sound and complete algorithms for causal effect identification only from a causal graph, we could develop a sound and complete algorithm for causal effect identification in the presence of CSI relations. 
    Although, such CSI relations in general cannot be inferred from observational distribution, we introduce a graphical constraint under which CSI relations of control variables can be inferred from the observation distribution.
    
\subsubsection*{Acknowledgements}
    This research was in part supported by the Swiss National Science Foundation under NCCR Automation, grant agreement 51NF40\_180545 and Swiss SNF project 200021\_204355/1.
\bibliography{bibliography}
\end{document}